\newcommand*{\HH}{_{\mathsf{H}}}
\theoremstyle{plain}
\newtheorem{assumption}{\bf{Assumption}}
\newtheorem{theorem}{\bf{Theorem}}
\newtheorem{corollary}{\bf{Corollary}}
\newtheorem{lemma}{\bf{Lemma}}
\newtheorem{remark}{\bf{Remark}}
\def\BibTeX{{\rm B\kern-.05em{\sc i\kern-.025em b}\kern-.08em
    T\kern-.1667em\lower.7ex\hbox{E}\kern-.125emX}}
\begin{document}

\title{Channel and Gradient-Importance Aware Device Scheduling for Over-the-Air Federated Learning}

\author{Yuchang~Sun,~\IEEEmembership{Graduate Student Member,~IEEE},
Zehong~Lin,~\IEEEmembership{Member,~IEEE}, Yuyi~Mao,~\IEEEmembership{Member,~IEEE},
Shi~Jin,~\IEEEmembership{Senior Member,~IEEE},
and Jun~Zhang,~\IEEEmembership{Fellow,~IEEE}

\thanks{            
        Part of this paper was presented at the 2023 IEEE 23rd International Conference on Communication Technology (ICCT) \cite{icct}. (\emph{Corresponding author: Zehong Lin.})
        
      	Yuchang Sun, Zehong Lin, and Jun Zhang are with the Department of Electronic and Computer Engineering, The Hong Kong University of Science and Technology, Hong Kong (e-mail: yuchang.sun@connect.ust.hk; eezhlin@ust.hk; eejzhang@ust.hk).
       
        Yuyi Mao is with the Department of Electrical and Electronic Engineering, The Hong Kong Polytechnic University, Hong Kong (e-mail: yuyi-eie.mao@polyu.edu.hk).
        
        Shi Jin is with the National Mobile Communications Research Laboratory, Southeast University, Nanjing, 210096, China (e-mail: jinshi@seu.edu.cn).
 
        }
        
}
\maketitle

\begin{abstract}

Federated learning (FL) is a popular privacy-preserving distributed training scheme, where multiple devices collaborate to train machine learning models by uploading local model updates. To improve communication efficiency, over-the-air computation (AirComp) has been applied to FL, which leverages analog modulation to harness the superposition property of radio waves such that numerous devices can upload their model updates concurrently for aggregation. However, the uplink channel noise incurs considerable model aggregation distortion, which is critically determined by the device scheduling and compromises the learned model performance. In this paper, we propose a probabilistic device scheduling framework for over-the-air FL, named \emph{PO-FL}, to mitigate the negative impact of channel noise, where each device is scheduled according to a certain probability and its model update is reweighted using this probability in aggregation. 
We prove the unbiasedness of this aggregation scheme and demonstrate the convergence of PO-FL on both convex and non-convex loss functions.
Our convergence bounds unveil that the device scheduling affects the learning performance through the \emph{communication distortion} and \emph{global update variance}. Based on the convergence analysis, we further develop a channel and gradient-importance aware algorithm to optimize the device scheduling probabilities in PO-FL. Extensive simulation results show that the proposed PO-FL framework with channel and gradient-importance awareness achieves faster convergence and produces better models than baseline methods.
\end{abstract}

\begin{IEEEkeywords}
Federated learning (FL), over-the-air computation (AirComp), device scheduling, channel awareness, gradient importance.
\end{IEEEkeywords}

\section{Introduction}\label{sec:intro}

%%%%% Background and federated learning 
In recent years, artificial intelligence (AI) and machine learning (ML) have experienced significant breakthroughs.
Various AI applications such as smart transportation and virtual reality (VR) generate a massive volume of data on devices at the edge of wireless networks.
These data, in turn, stimulate the development of AI as they can be utilized to train powerful ML models for numerous intelligent applications.
The conventional approach of utilizing these data is to upload them to a central server for centralized model training.
This approach, however, incurs a high communication cost and poses a severe risk of data leakage.
To address these issues, federated learning (FL) \cite{fedavg} was proposed as a privacy-preserving distributed training scheme, where the devices train their models locally based on private data and periodically send their local model updates to the central server. The server aggregates the received model updates to generate a new global model for the next round of training. As there is no need for devices to share local data, FL enables collaborative training while preserving data privacy.

%%%%% OTA and OTA-FL
Despite the benefits of FL, it suffers a significant communication bottleneck for model uploading due to limited communication resources \cite{shahid2021communication,lim2020federated}. Conventional FL systems allocate orthogonal channels to the devices for model uploading, which, however, leads to an explosive bandwidth requirement and fails to support simultaneous transmissions of many devices. To alleviate this bottleneck, over-the-air computation (AirComp) \cite{ota}, which leverages the superposition property of radio waves to support the concurrent transmission from multiple devices, has been introduced to FL \cite{zhu2019broadband}.
By proper pre-equalization and transmit power control at the devices, the server can estimate a linear combination of signals from the devices.
The required bandwidth or communication latency of AirComp is independent of the number of devices and thus significantly relieves the communication bottleneck.
Since the server only requires a weighted sum of local model updates to devise the global model, AirComp suits FL aggregation well, which promotes the emergence of a new area, namely \textit{over-the-air FL} \cite{zhu2019broadband,yang2020federated,lin2022relay}.

However, over-the-air FL is prone to communication distortion during model aggregation due to wireless channel fading and noise \cite{yang2020federated,lin2022relay,9459539,fan2021joint}. The distortion is quantified by the mean square error (MSE) between the received and ground-truth signals. This can mislead the update of the global model and severely degrade the training performance.
To mitigate these drawbacks, various methods have been proposed, such as power control \cite{fan2021joint,zhang2021gradient,cao2022jsac} and transceiver designs \cite{zhu2019broadband,lin2022relay}.
The power control method with a channel inversion-based structure \cite{fan2021joint} is commonly used in over-the-air FL, where power control and denoising factors are co-designed to achieve alignments among devices given individual average transmit power constraints.
An improvement to this design is presented in \cite{zhang2021gradient} by taking model update statistics into account, which accelerates the convergence speed.
Similarly, Cao \textit{et al.} \cite{cao2022jsac} maximized the convergence speed by designing a power control policy.
These approaches, however, assume the participation of all the devices including those with poor channel conditions, which leads to performance depression.
In AirComp, devices with better channel conditions must reduce their transmit power to achieve global alignment across all devices. Consequently, devices with weaker channels tend to dominate the overall communication distortion incurred during alignment. This negative effect can be mitigated through carefully scheduling devices based on channel quality to minimize the impact of weak signals.
In this regard, Zhu \textit{et al.} \cite{zhu2019broadband} proposed a truncated channel inversion strategy, where devices with poor channel conditions are excluded from the concurrent transmission.
While being effective in reducing communication distortion, this approach may exclude devices with important model updates that could accelerate training. To maximize the benefits of collaborative FL, device scheduling must take into account both channel quality and the significance of local updates. Simply scheduling devices based on channel conditions alone could leave out local updates that are most informative for improving the global model. Therefore, an optimal scheduling policy for over-the-air FL is needed to not only minimize communication distortion but also optimize learning performance by judiciously selecting devices according to channel quality and the importance of their local model updates.

Recent works have proposed different device scheduling policies for FL to alleviate the distortion of AirComp. For instance, Yang \textit{et al.} \cite{yang2020federated} proposed a joint device selection and beamforming design for over-the-air FL, which prioritizes devices with better channel quality.
An update importance aware scheduling policy was developed in \cite{ma2021user} that selects devices with more informative model updates, i.e., those with larger update norms \cite{luping2019cmfl}, for faster convergence in over-the-air FL.
Nevertheless, these policies considered a single aspect for device scheduling, i.e., either the channel condition or update importance, which limits the training performance in extreme cases.
To address this issue, a recent work \cite{du2023gradient} explored a mixed scheduling policy that defines device quality as a weighted sum of its local update norm and channel condition.
Moreover, some works \cite{scheduling1,scheduling2} proposed comprehensive device scheduling designs to balance the training performance and energy cost via Lyapunov optimization.
By considering both update importance and channel conditions in device scheduling, these works significantly improved the training performance of over-the-air FL.
However, they adopted deterministic metrics for device scheduling, which may lead to biased model aggregation in each communication round.
Specifically, the devices in FL have non-independent and identically distributed (non-IID) data, and the local training on these data can result in diverged local updates.
Consequently, the global update aggregated from a subset of selected devices may deviate from the expected one from all the devices. \footnote{We note that even in the IID scenarios, variations exist in data samples observed by the devices \cite{attota2021ensemble}. However, in this work, our focus is specifically on ensuring the statistical unbiasedness of the model update with respect to the non-IID data distribution among devices.}
The biased update degrades the convergence performance of global training, as it is only guaranteed to converge to a neighbor of the optimum \cite{wang2022client,wu2022incentive}.
These limitations highlight the necessity for novel approaches in device scheduling that can ensure the unbiased aggregation while utilizing only a subset of local updates.

To ensure unbiased model aggregations in FL, probabilistic device scheduling has been explored in the existing literature \cite{renjinke,zhang2022communication,sun2022stochastic}.
By reweighting the local gradient of each selected device with a scheduling probability, the aggregated gradient provides an unbiased estimate of the global gradient. This approach effectively avoids misleading the update of global model training and improves the learned model accuracy.
Ren \textit{et al.} \cite{renjinke} proposed a probabilistic scheduling framework for FL that admits devices based on an optimized probability distribution, effectively balancing the update importance and per-round communication latency. Based on the convergence analysis, this framework was further developed in \cite{zhang2022communication} to minimize the total communication time of an FL process.
Moreover, probabilistic device scheduling was adopted in \cite{sun2022stochastic} to construct an unbiased estimate of the global update computed on all the devices by involving only a subset of fast devices.
However, these scheduling methods mainly focused on reducing the communication latency for conventional FL systems.
In contrast, the communication latency for AirComp is determined by the number of transmitted symbols, which is independent of the device scheduling.
Consequently, these methods are not applicable to over-the-air FL.
In this context, designing a probabilistic device scheduling policy for over-the-air FL poses new challenges. 
Specifically, the policy must jointly consider the impact of channel conditions on communication distortion and the importance of local updates uploaded by the devices. Therefore, developing a novel device scheduling policy becomes necessary to enhance the learning performance of over-the-air FL.
To the best of our knowledge, we are the first to investigate probabilistic scheduling in over-the-air FL.

In this work, we propose a probabilistic device scheduling policy, which strikes a balance between the communication distortion and gradient importance, to improve the training performance of over-the-air FL.
Our contributions are summarized as follows:
\begin{itemize}
    \item We propose a probabilistic device scheduling framework for over-the-air FL named \textit{PO-FL}.
    Specifically, in each communication round, the devices are scheduled for AirComp according to certain probabilities and their local model updates are reweighted using the corresponding scheduling probabilities in aggregation.
    The server constructs an estimate of the weighted sum of the received local updates to update the global model, which is sent back to the devices for the next round of training.
    \item We prove that the constructed global update at the server is an unbiased estimate of the desired weighted sum of local updates from all the devices.
    Then, we analyze the convergence of PO-FL for both convex and non-convex loss functions to characterize the impact of device scheduling. The analytical results show that the device scheduling critically affects the training performance through both the communication distortion and global update variance. 
    \item  Based on the analysis, we investigate the optimization of device scheduling to improve the training performance of PO-FL. We propose a channel and gradient-importance aware device scheduling algorithm to jointly minimize the communication distortion and global update variance.
    The proposed algorithm considers both channel conditions and gradient importance of devices and assigns proper weights to the devices.
    \item We evaluate the proposed device scheduling policy on two image classification datasets, i.e., MNIST and CIFAR-10, via extensive simulations. The simulation results demonstrate the benefits of the proposed scheduling policy in improving the learning performance. Specifically, the proposed policy achieves faster convergence than the baseline methods and performs close to the idealized case without channel noise.
\end{itemize}

The rest of this paper is organized as follows.
In Section \ref{sec:model}, we describe the system model.
In Section \ref{po-fl}, we present the training process of PO-FL and analyze its convergence.
We formulate the device scheduling problem based on the analytical results and propose a channel and gradient-importance aware algorithm to solve the problem in Section \ref{sec:optimization}.
In Section \ref{sec:simulation}, we evaluate the proposed design via extensive simulations. Finally, we conclude the paper and discuss the future works in Section \ref{sec:conclusion}.

\emph{Notations:}
Throughout this paper, we use boldface lower-case letters (e.g., $\bm{x}$) and calligraphy letters (e.g., $\mathcal{S}$) to represent vectors and sets, respectively. We use $\bm{x}[d]$ and $\| \bm{x} \|_2$ to denote the $d$-th entry and $l_2$-norm, respectively, of vector $\bm{x}$, and $|\mathcal{S}|$ to denote the cardinality of set $\mathcal{S}$.
We use $\bm{0}_D$, $\mathbf{I}_D$, and $\bm{1}_D$ to denote the $D \times D$ zero matrix, $D \times D$ identity matrix, and $D$-dimensional all-ones vector, respectively. 
The set of integers $\{0,1,\dots,T-1\}$ is denoted by $[T]$.
In addition, $\mathbbm{1}\{\cdot\}$ is the indicator function, i.e., $\mathbbm{1}\{A\}=1$ if event $A$ happens and $\mathbbm{1}\{A\}=0$ otherwise, and $\mathcal{CN}(\mu,\sigma^2)$ represents the distribution of a circularly symmetric complex Gaussian random variable with mean $\mu$ and variance $\sigma^2$.

\section{System Model}\label{sec:model}

\subsection{Federated Learning}\label{sec:2a}

We consider an FL system consisting of a server and $N$ devices, all of which are equipped with a single antenna.
Let $\mathcal{N} \triangleq \{1,\dots, N\}$ denote the set of devices.
Each device $i\in \mathcal{N}$ has a local dataset $\mathcal{D}_i = \{(\bm{u}_{i,j}, v_{i,j})\}_{j=1}^{m_i}$ consisting of $m_i = |\mathcal{D}_i| $ training data samples, where $\bm{u}_{i,j}$ is the feature of sample $j$ at device $i$ and $v_{i,j}$ is the label of $\bm{u}_{i,j}$.
The devices collaborate to train a model $\bm{w} \in \mathbb{R}^{D}$ with $D$ trainable parameters under the coordination of the server.
The training objective is the loss over all the data samples, which is given as follows:
\begin{equation}
    \min_{\bm{w}\in \mathbb{R}^{D}} f(\bm{w}) \triangleq \frac{1}{M} \sum_{i\in\mathcal{N}} \sum_{j=1}^{m_i} F_i(\bm{w}; \bm{u}_{i,j}, v_{i,j}),
    \label{eq:obective}
\end{equation}
where $M \triangleq \sum_{i \in \mathcal{N}} m_i$ and $F_i(\bm{w}; \bm{u}_{i,j}, v_{i,j})$ is the training loss on sample $(\bm{u}_{i,j}, v_{i,j})$.

To solve the problem in \eqref{eq:obective}, we adopt the classic FedAvg algorithm \cite{fedavg}. The training objective is decomposed as follows:
\begin{equation}
    f(\bm{w}) = \sum_{i\in\mathcal{N}} \rho_i^t f_i(\bm{w}),
\end{equation}
where $\rho_i^t \triangleq \frac{m_i}{M}$ denotes the aggregation weight of device $i$ and
\begin{equation}
    f_i(\bm{w}) \triangleq \frac{1}{m_i} \sum_{j=1}^{m_i} F_i(\bm{w}; \bm{u}_{i,j}, v_{i,j})
\end{equation}
denotes the local loss function at device $i$. Accordingly, FedAvg solves Problem \eqref{eq:obective} in a distributed manner by minimizing the local loss function $f_{i}\left(\bm{w}\right)$ at each device iteratively. Specifically, the training process of FedAvg spans $T$ communication rounds and the $t$-th communication round consists of the following main steps:
\begin{enumerate}
    \item \textbf{Global model broadcasting}: The server broadcasts the updated global model $\bm{w}^t$ to all the devices.
    
    \item \textbf{Local gradient calculation}: 
    Each device $i$ calculates the local gradient by randomly sampling a mini batch of samples $\bm{\xi}_i$ from its local dataset $\mathcal{D}_i$ as $\bm{g}_i^t \triangleq \nabla f_i(\bm{w}^t; \bm{\xi}_i)$.
    
    \item \textbf{Gradient uploading}: Due to the limited communication resources, the server selects a subset of devices $\mathcal{S}^t$ according to a specific criterion, and each selected device uploads its local gradient to the server.
    
    \item \textbf{Model aggregation}: The server estimates the weighted sum of the uploaded local gradients $\bm{y}^t \!=\! \sum_{i \in \mathcal{S}^t} \rho_i^t \bm{g}_i^t$ as $\hat{\bm{y}}^t$ and updates the global model as follows:
    \begin{equation}
       \bm{w}^{t+1} = \bm{w}^{t} - \eta^t \hat{\bm{y}}^t,
       \label{eq:new-model}
    \end{equation}
    where $\eta^t>0$ is the learning rate.
\end{enumerate}

\subsection{Over-the-Air Federated Learning}\label{sec:2b}

The uplink bandwidth cost is the communication bottleneck of FL, since the training process requires periodic uploading of local gradients. To relieve this bottleneck, we adopt the AirComp technique \cite{ota} for uplink model aggregation \cite{yang2020federated}, which allows multiple devices to concurrently upload their gradients over the same channel and the server can reconstruct $\sum_{i \in \mathcal{S}^t} \rho_i^t \bm{g}_i^t$ with appropriate signal processing.

As the entries in each local gradient may vary significantly in value, the selected devices need to transform the gradients into normalized signal vectors to facilitate pre-equalization and power control \cite{zhu2019broadband,yang2020federated,lin2022relay}.
To this end, each selected device $i \in \mathcal{S}^t$ normalizes its local gradient $\bm{g}^{t}_{i}$ into a symbol vector $\bm{s}^{t}_{i}\in\mathbb{C}^{D}$ such that $\mathbb{E}[\bm{s}^{t}_{i} (\bm{s}^{t}_{j})^{\HH}] = \bm{0}_D$ and $\mathbb{E}[\bm{s}^{t}_{i} (\bm{s}^{t}_{i})^{\HH}] = \mathbf{I}_D$, $\forall i,j\in\mathcal{S}^t,i\neq j$.\footnote{Similar to \cite{zhu2019broadband,yang2020federated,lin2022relay}, we assume that the local gradients of the devices are independent and identically distributed.}
Specifically, each device first computes the mean and variance of the local gradient as $M_{i}^{t} \triangleq \frac{1}{D} \sum_{d=1}^{D} \bm{g}_{i}^{t}[d]$ and $V_{i}^{t} \triangleq \frac{1}{D} \sum_{d=1}^{D} (\bm{g}_{i}^{t}[d] - M_{i}^{t} )^2$, respectively.
The server collects the local statistics $\{M_{i}^{t}, V_{i}^{t}\}$ through uplink control channels and computes the global mean and variance as $M_{\bm{g}}^{t} \triangleq \sum_{i\in \mathcal{S}^t} \rho_{i}^{t} M_{i}^{t}$ and $V_{\bm{g}}^{t} \triangleq \sum_{i\in \mathcal{S}^t} \rho_{i}^{t} V_{i}^{t}$, respectively.
Note that the local statistics $\{M_{i}^{t}, V_{i}^{t}\}$ are scalars and can be uploaded to the server with a negligible communication cost.
Then, the server broadcasts the global statistics $\{M_{\bm{g}}^{t}, V_{\bm{g}}^{t}\}$ to the selected devices. 
Afterward, each device $i$ normalizes its gradient as a symbol vector $\bm{s}^{t}_{i}$:
\begin{equation}
    \bm{s}^{t}_{i} =\frac{1}{\sqrt{V_{\bm{g}}^{t}}} \left( \bm{g}^{t}_{i} - M_{\bm{g}}^{t} \bm{1}_D \right).
    \label{eq:normalize}
\end{equation}

By applying transmit scaling, the selected devices concurrently upload the scaled $D$-dimensional symbol vectors $\{\bm{s}_i^t: \forall i \in \mathcal{S}^t\}$ to the server. 
Let $h_i^t \in \mathbb{C}$ denote the channel coefficient between device $i$ and the server, and $b^{t}_{i}$ denote the transmit scalar of device $i$ in the $t$-th communication round.
We assume an individual transmit power constraint at each device, i.e.,
\begin{equation}
    \mathbb{E} \left[ |b_i^t \bm{s}_i^t[d] |^2 \right] = |b_i^t|^2 \leq P, \forall 1 \leq d \leq D,
    \label{eq:power}
\end{equation}
where $P$ is the maximum transmit power. 
The received signal vector at the server is given by:
\begin{equation}
    \Tilde{\bm{y}}^t = \sum_{i\in \mathcal{S}^t} h_{i}^{t} b^{t}_{i} \bm{s}^{t}_{i} + \bm{z}^t,
    \label{eq:yt}
\end{equation}
where $\bm{z}^t \in \mathbb{C}^D$ is the additive channel noise vector with each element $\bm{z}^t[d] \sim \mathcal{CN}(0, \sigma_z^2), 1 \leq d \leq D$.

To alleviate the negative effect of the channel noise on signal estimation, the server employs a de-noising receive scalar $a^t$ to the received signal vector $\Tilde{\bm{y}}^t$ and obtains $\frac{1}{a^t} \Tilde{\bm{y}}^t$. 
Then, the server can use the normalization factors $M_{\bm{g}}^{t}$ and $V_{\bm{g}}^{t}$ to construct the gradient estimate $\hat{\bm{y}}^t$ via a de-normalization step as follows:
\begin{align}
    \hat{\bm{y}}^t =& \sqrt{V_{\bm{g}}^t} \frac{1}{a^t} \Tilde{\bm{y}}^t + M_{\bm{g}}^t \mathbf{1}_D \\
    =& \sum_{i\in \mathcal{S}^t} \frac{h_{i}^{t} b^{t}_{i}}{a^{t}} \bm{g}^{t}_{i} + \frac{ \sqrt{V_{\bm{g}}^{t}}}{a^{t}} \bm{z}^t.
    \label{eq:eq6}
\end{align}
We illustrate the process of AirComp in an FL system in Fig. \ref{fig:computation}.
The accuracy of the estimation is quantified by the MSE between the constructed estimate $\hat{\bm{y}}^t$ and the ground truth $\bm{y}^t = \sum_{i \in \mathcal{S}^t} \rho_i^t \bm{g}_i^t$ as follows:
\begin{align}
    e_{\text{com}}^{t} \triangleq & \mathbb{E} \left[ \left\| \hat{\bm{y}}^t - \bm{y}^t \right\|_2^2 \right] 
    \label{eq:mse-eq1}
    \\
    = & \mathbb{E} \left[ \left\| \sum_{i\in \mathcal{S}^t} \frac{h_{i}^{t} b^{t}_{i} }{a^{t}} \bm{g}^{t}_{i} + \frac{ \sqrt{V_{\bm{g}}^{t}}}{a^{t}} \bm{z}^t - \sum_{i \in \mathcal{S}^t} \rho_i^t \bm{g}_i^t \right\|_2^2 \right].
    \label{eq:mse-eq2}
\end{align}

\begin{figure}[!t]
    \centering
    \includegraphics[width=\columnwidth]{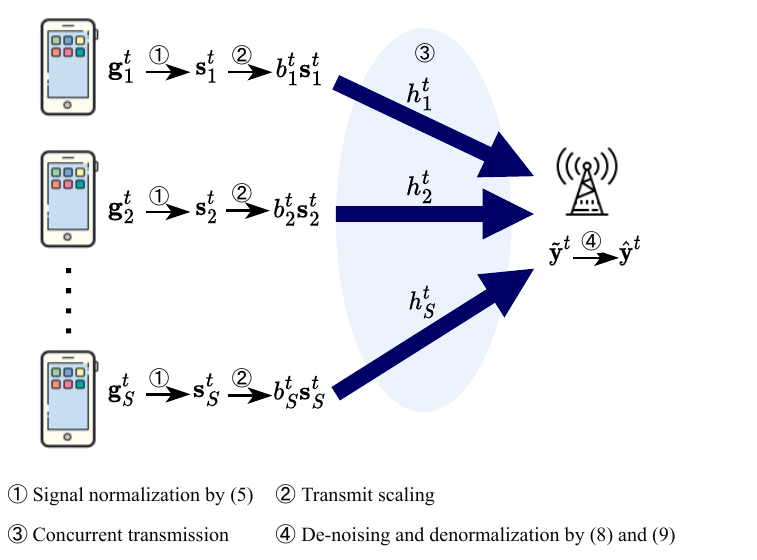}
    \caption{An illustration of AirComp in an FL system, where $|\mathcal{S}^t| = S$ devices are assumed to be scheduled for gradient uploading.}
    \label{fig:computation}
\end{figure}

To maximize the estimation accuracy, we minimize the communication distortion defined in \eqref{eq:mse-eq2} by optimizing the transmit scalars $\{ b_i^t \}$ and the de-noising receive scalar $a^t$.
The optimal transceiver design variables and the minimum communication distortion are given in the following lemma.
\begin{lemma} \label{lem:transceiver}
With power constraints in \eqref{eq:power}, the optimal transceiver design variables are given by
\begin{equation}
    b^{t}_{i} = \frac{1}{h_{i}^{t}} \rho_{i}^{t} a^t, \forall i \in \mathcal{S}^t,
\end{equation}
and
\begin{equation}
    a^t = \min_{i\in \mathcal{S}^t} \frac{\sqrt{P} |h_{i}^{t}|}{\rho_{i}^{t}}. 
\end{equation}
Thus, we can rewrite the constructed gradient estimate as follows:
\begin{equation}
	\hat{\bm{y}}^t = \sum_{i\in \mathcal{S}^t} \rho^t_i \bm{g}^{t}_{i} + \frac{ \sqrt{V_{\bm{g}}^{t}}}{a^{t}} \bm{z}^t,
    \label{eq:yt-eq1}
\end{equation}
and the minimum communication distortion is given by:
\begin{equation}
    e_{\text{com}}^{t} = \mathbb{E} \left[ \left\| \frac{ \sqrt{V_{\bm{g}}^{t}}}{a^{t}} \bm{z}^t \right\|_2^2 \right] = \frac{D \sigma_z^2 V_{\bm{g}}^{t}}{P} \max_{i\in \mathcal{S}^t} \frac{(\rho_{i}^{t})^2}{|h_{i}^{t}|^2}.
    \label{eq:mse-detail}
\end{equation}
\end{lemma}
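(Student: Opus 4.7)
The plan is to exploit an additive decomposition of the MSE in \eqref{eq:mse-eq2} into a signal-misalignment term and a channel-noise amplification term, and then minimize the two separately.

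First, I would use the fact that $\bm{z}^t$ is zero-mean and independent of the gradient quantities, so that expanding the squared norm and taking expectation kills the cross terms between signal and noise. Together with $\mathbb{E}[\|\bm{z}^t\|_2^2]=D\sigma_z^2$, this yields
\begin{equation*}
e_{\text{com}}^{t} = \mathbb{E}\Bigl[\Bigl\| \sum_{i\in\mathcal{S}^t}\bigl(\tfrac{h_i^t b_i^t}{a^t}-\rho_i^t\bigr)\bm{g}_i^t \Bigr\|_2^2\Bigr] + \frac{D\sigma_z^2 V_{\bm{g}}^t}{|a^t|^2}.
\end{equation*}

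Next, I would observe that the first term is nonnegative and becomes identically zero, independent of the realization of $\{\bm{g}_i^t\}$, precisely when every scheduled device applies the channel-inversion choice $b_i^t = \rho_i^t a^t / h_i^t$; substituting this into \eqref{eq:eq6} immediately produces the estimate \eqref{eq:yt-eq1}. The per-device power constraint $|b_i^t|^2\leq P$ then reduces to $(\rho_i^t)^2|a^t|^2/|h_i^t|^2\leq P$ for every $i\in\mathcal{S}^t$, equivalently $|a^t|\leq \min_{i\in\mathcal{S}^t}\sqrt{P}\,|h_i^t|/\rho_i^t$. Since the surviving noise term $D\sigma_z^2 V_{\bm{g}}^t/|a^t|^2$ is strictly decreasing in $|a^t|$, it is minimized by saturating this feasibility bound, which gives $a^t = \min_{i\in\mathcal{S}^t}\sqrt{P}\,|h_i^t|/\rho_i^t$. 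Plugging this value back into the noise term produces \eqref{eq:mse-detail}.

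The step I would expect to be the main obstacle is justifying that channel inversion is the right structure: in principle one might try to trade some residual misalignment at the bottleneck device for a larger $|a^t|$ and hence weaker noise. The cleanest argument is that the signal-misalignment term is a $\{\bm{g}_i^t\}$-dependent quadratic form whose coefficients are the designer's choice $\{h_i^t b_i^t/a^t-\rho_i^t\}$; because the transceiver must be designed without access to the actual gradient realizations, the only choice that drives this term to zero uniformly in $\{\bm{g}_i^t\}$ is channel inversion, which also guarantees unbiased recovery of the target sum $\sum_{i\in\mathcal{S}^t}\rho_i^t\bm{g}_i^t$ that underlies the PO-FL framework. Within this family, pushing $|a^t|$ against the tightest per-device power constraint is the only remaining degree of freedom, and this recovers the expressions claimed in the lemma, in line with the channel-inversion designs adopted in the AirComp references cited earlier.
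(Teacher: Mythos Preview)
Your proposal is correct and follows the standard channel-inversion argument for AirComp transceiver design. The paper itself does not supply a proof of this lemma: it simply notes that the proof is similar to that of \cite[Lemma~1]{lin2022relay} and omits it for brevity, so there is nothing substantive to compare against beyond confirming that your decomposition into a signal-misalignment term plus a noise-amplification term, followed by zeroing the former via $b_i^t=\rho_i^t a^t/h_i^t$ and then pushing $|a^t|$ to the tightest per-device power limit, is exactly the route that reference takes.
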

\begin{proof}
    The proof is similar to that of \cite[Lemma~1]{lin2022relay} and is omitted for brevity.
\end{proof}

We see from \eqref{eq:mse-detail} that the estimation accuracy depends on the set of devices $\mathcal{S}^t$ that are selected. 
To avoid significant model aggregation distortion, existing works employ various methods such as power control \cite{fan2021joint,zhang2021gradient,cao2022jsac} and transceiver designs \cite{zhu2019broadband,lin2022relay}.
These works allow full participation of all clients including the devices with poor channels, which still deteriorates the training performance.
One straightforward method is to only schedule devices with acceptable channel conditions for model aggregation.
However, as will be elaborated in Section \ref{sec:optimization}, this scheduling policy may exclude the important updates of those weak devices.
In addition, aggregating the local gradients from only a fraction of devices usually results in a biased estimate of the desired global gradient $\nabla f(\bm{w}^t) = \sum_{i\in \mathcal{N}} \frac{m_i}{M} \bm{g}^{t}_{i}$, which significantly slows down the convergence speed and compromises the learned model accuracy \cite{wang2022client,wu2022incentive}.
To achieve unbiased gradient estimates, we propose a probabilistic device scheduling scheme for over-the-air FL named PO-FL.
It is worth noting that existing works \cite{renjinke,zhang2022communication} on probabilistic device scheduling aim to minimize the communication time in conventional FL with digital communication scheme. However, the communication time is a constant in AirComp due to the concurrent transmission, which makes the designs in these works inapplicable to over-the-air FL. In contrast, the proposed device scheduling scheme focuses on alleviating the model aggregation distortion in over-the-air FL. 
In the next section, we describe the PO-FL framework in detail and analyze its convergence performance.

\section{The PO-FL Framework}\label{po-fl}

In this section, we elaborate on the PO-FL framework and characterize the impact of device scheduling on the convergence for both convex and non-convex loss functions.

\subsection{Framework Description}

In the $t$-th communication round, the server computes the scheduling probability $p_i^t$ for each device $i$ based on an optimization algorithm, which will be elaborated in Section \ref{sec:optimization}. 
The server then selects a subset of devices $\mathcal{S}^t$ from the device set $\mathcal{N}$, choosing device $i$ with probability $p_i^t$.
The selected devices concurrently upload the local gradients $\{ \bm{g}^t_i \}$ to the server for aggregation following the AirComp procedure elaborated in Section \ref{sec:2b}.
To achieve an unbiased model aggregation, the local gradient of each selected device $i \in \mathcal{S}^t$ is reweighted by the reciprocal of its scheduling probability $p_i^t$, i.e., $\rho_i^t = \frac{m_i}{M p_i^t}$.
Thus, we obtain $a^t = \min_{i\in \mathcal{S}^t} \frac{M p_i^t \sqrt{P} |h_{i}^{t}|}{m_i}$, and the constructed gradient in \eqref{eq:yt-eq1} becomes:
\begin{equation}
    \hat{\bm{y}}^t = \sum_{i\in \mathcal{S}^t} \frac{m_i}{M p_i^t} \bm{g}^{t}_{i} + \max_{i\in \mathcal{S}^t} \frac{ m_i \sqrt{ V_{\bm{g}}^{t}}}{M p_i^t \sqrt{P} |h_{i}^{t}|} \bm{z}^t.
    \label{eq:yt-2}
\end{equation}
As will be proved in Lemma \ref{unbiased}, the constructed gradient in \eqref{eq:yt-2} is an unbiased estimate of the desired global gradient $\nabla f(\bm{w}^t)$.
Then, the global model is updated by \eqref{eq:new-model} for the next round of training. The above process repeats for $T$ communication rounds.
We summarize the training process in Algorithm \ref{alg} and provide an overview of the PO-FL framework in Fig. \ref{fig:framework}.

Note that the estimation of the global gradient in \eqref{eq:yt-2} critically depends on the scheduled device set $\mathcal{S}^t$, which is determined by the scheduling probabilities $\{p_i^t\}$.
In particular, the probabilities should be closely related to the device characteristics including the local gradients $\{\bm{g}_i^t \}$, the data amounts $\{ m_i \}$, and the channel coefficients $\{ h_i^t \}$.
To obtain accurate estimates, we need to optimize the scheduling probabilities $\{p_i^t\}$ by jointly considering these factors.
In the following subsection, we will analyze the convergence of PO-FL to gain a more in-depth understanding of the impact of device scheduling. 
Then, we will formulate the problem of optimizing the device scheduling based on the analytical results and propose an efficient algorithm to solve this problem in Section \ref{sec:optimization}.

\begin{figure*}
    \centering
    \includegraphics[width=\textwidth]{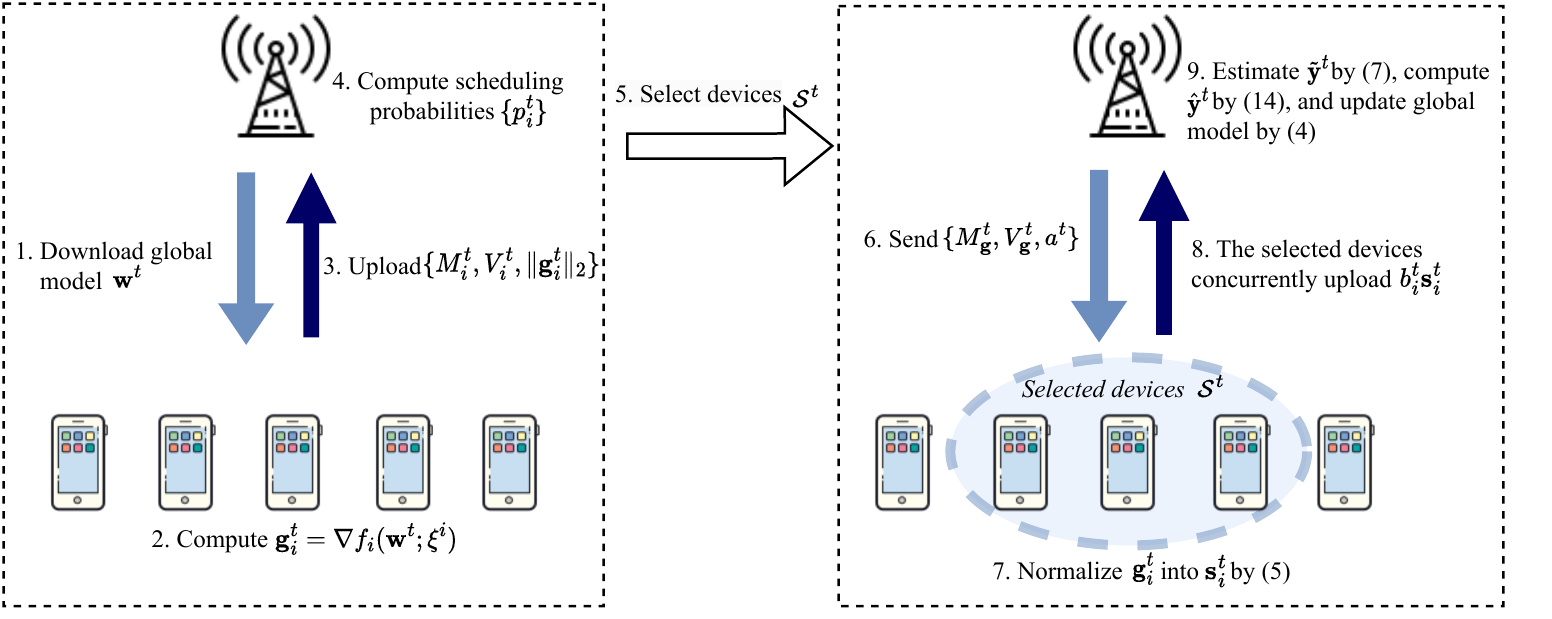}
    \caption{
    An overview of the PO-FL framework.
    Compared with \cite{zhu2019broadband,yang2020federated,lin2022relay}, in Step 3, we additionally require the devices to upload the gradient norm $\|\bm{g}_i^t\|_2$ for the computation of the scheduling probabilities in \eqref{eq:single}.
    Note that $\|\bm{g}_i^t\|_2$ is a scalar and can be uploaded to the server with a negligible communication cost.
    }
    \label{fig:framework}
\end{figure*}

\begin{algorithm}[tb]
\caption{Training Process of PO-FL} \label{alg}
\begin{algorithmic}[1]
\STATE{Initialize a global model $\bm{w}^{0}$;}
\FOR{$t=0,1,\dots,T-1$}
    \STATE{Broadcast the global model $\bm{w}^{t}$ to each device $i\in \mathcal{N}$;}
    \FOR{each device $i\in \mathcal{N}$}
    \STATE{Compute the local gradient with respect to the local mini batch $\bm{\xi}_i \subseteq \mathcal{D}_i$, i.e., $\bm{g}_i^t \triangleq \nabla f_i(\bm{w}^t; \bm{\xi}_i)$;}
    \STATE{Compute and upload the local statistics $\{M_i^t, V_i^t\}$ and gradient norm $\|\bm{g}_i^t\|_2$ to the server;}
    \ENDFOR
    \STATE{The server computes the scheduling probabilities $\{p_i^t\}$, selects a subset of devices $\mathcal{S}^t$ according to $\{p_i^t\}$, and broadcasts $\left\{M_{\bm{g}}^t, V_{\bm{g}}^t, a^t\right\}$ to selected devices;}
    \FOR{each device $i\in \mathcal{S}^t$}
    \STATE{Normalize $\bm{g}_i^t$ into $\bm{s}_i^t$ by \eqref{eq:normalize};}
    \ENDFOR
    \STATE{The devices in $\mathcal{S}^t$ concurrently upload $\{b_i^t \bm{s}_i^t\}$ to the server via AirComp;}
	\STATE{The server receives the signal vector by \eqref{eq:yt}, estimates the global gradient by \eqref{eq:yt-2}, and updates the global model by \eqref{eq:new-model};}
\ENDFOR
\end{algorithmic}
\end{algorithm}

\subsection{Convergence Analysis}\label{sec:convergence}

To facilitate the analysis, we make the following assumptions on the local loss functions \cite{amiri2021convergence,lin2022coexisting,sun2021semi, chen2023knowledge,liconvergence}.

\begin{assumption}\label{smooth}
($L$-smoothness)
There exists a constant $L>0$ such that for any $\bm{w}_1,\bm{w}_2\in\mathbb{R}^{D}$, we have:
\begin{equation}
    \| \nabla f_i(\bm{w}_1) - \nabla f_i(\bm{w}_2) \|_2 \leq L \| \bm{w}_1 - \bm{w}_2 \|_2, \forall i\in\mathcal{N}.
\end{equation}
\end{assumption}

\begin{assumption}\label{sgd}
(Unbiased and variance-bounded gradient)
Any stochastic gradient $ \nabla f_i(\bm{w}; \bm{\xi}_i)$ computed on a randomly sampled mini batch $\bm{\xi}_i$ is an unbiased estimate of the gradient over all the data of device $i$, i.e., 
\begin{equation}
    \mathbb{E}\left[ \nabla f_i(\bm{w}; \bm{\xi}_i) \right] = \nabla f_i(\bm{w}), \forall i\in\mathcal{N}.
\end{equation}
Besides, there exists a constant $\sigma>0$ such that 
\begin{equation}
    \mathbb{E}\left[ \| \nabla f_i(\bm{w}; \bm{\xi}_i) - \nabla f_i(\bm{w}) \|_2^2 \right] \leq \sigma^2, \forall i\in\mathcal{N}.
\end{equation}
\end{assumption}

\begin{assumption}\label{sgd2}
(Bounded gradient)
There exists a constant $G>0$ such that
\begin{equation}
    \| \nabla f_i(\bm{w}; \bm{\xi}_i) \|^2 \leq G^2, \forall i\in\mathcal{N}.
\end{equation}
\end{assumption}

The following lemma demonstrates the unbiasedness of gradient estimate $\hat{\bm{y}}^t$ constructed in the PO-FL framework.

\begin{lemma}\label{unbiased}
The constructed global gradient in each communication round is an unbiased estimate of the global gradient, i.e.,
\begin{equation}
    \mathbb{E} \left[\hat{\bm{y}}^{t} \right] = \nabla f(\bm{w}^{t}),
    \label{eq:unbised}
\end{equation}
where the expectation is taken with respect to the noise distribution, mini-batch data sampling, and device scheduling.
\end{lemma}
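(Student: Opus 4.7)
The plan is to decompose $\hat{\bm{y}}^t$ into its signal and noise components, and then apply the law of total expectation over the three independent sources of randomness: the channel noise $\bm{z}^t$, the device scheduling (the set $\mathcal{S}^t$), and the mini-batch sampling that produced $\bm{g}_i^t$. Using \eqref{eq:yt-2}, write
\begin{equation}
\hat{\bm{y}}^t = \underbrace{\sum_{i\in\mathcal{N}} \mathbbm{1}\{i\in\mathcal{S}^t\}\frac{m_i}{Mp_i^t}\bm{g}_i^t}_{\text{signal term}} + \underbrace{\max_{i\in\mathcal{S}^t}\frac{m_i\sqrt{V_{\bm{g}}^t}}{Mp_i^t\sqrt{P}|h_i^t|}\bm{z}^t}_{\text{noise term}},
\end{equation}
where I have re-expressed the sum over $\mathcal{S}^t$ as a sum over all devices weighted by scheduling indicators, which is the key bookkeeping step.

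For the noise term, I would use that $\bm{z}^t$ is zero-mean and independent of the scheduling indicators, the channel coefficients, and the mini-batch sampling. Even though the scalar coefficient $\max_{i\in\mathcal{S}^t}\frac{m_i\sqrt{V_{\bm{g}}^t}}{Mp_i^t\sqrt{P}|h_i^t|}$ is itself a random variable through $\mathcal{S}^t$, conditioning on $\mathcal{S}^t$ and the local statistics and taking expectation over $\bm{z}^t$ first yields $\bm{0}_D$, so the whole noise term contributes zero in expectation.

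For the signal term, I would apply a nested-expectation argument. First, conditioned on the local gradients $\{\bm{g}_i^t\}_{i\in\mathcal{N}}$ (i.e., on the mini-batch draws), the only randomness is $\mathbbm{1}\{i\in\mathcal{S}^t\}$, which has mean $p_i^t$ by construction of the probabilistic scheduler. Hence
\begin{equation}
\mathbb{E}_{\mathcal{S}^t}\!\left[\sum_{i\in\mathcal{N}} \mathbbm{1}\{i\in\mathcal{S}^t\}\frac{m_i}{Mp_i^t}\bm{g}_i^t \,\Big|\, \{\bm{g}_i^t\}\right] = \sum_{i\in\mathcal{N}} \frac{m_i}{M}\bm{g}_i^t.
\end{equation}
Then, taking the outer expectation over the mini-batch sampling and invoking Assumption~\ref{sgd}, which gives $\mathbb{E}[\bm{g}_i^t]=\nabla f_i(\bm{w}^t)$, yields $\sum_{i\in\mathcal{N}}\frac{m_i}{M}\nabla f_i(\bm{w}^t)=\nabla f(\bm{w}^t)$ by the decomposition in Section~\ref{sec:2a}. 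Adding the zero contribution from the noise term completes the proof.

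The argument is essentially bookkeeping; the one subtlety worth flagging is that the $\max$ over $\mathcal{S}^t$ in the noise coefficient makes it tempting to worry about correlation between the coefficient and $\bm{z}^t$, but this is a non-issue because $\bm{z}^t$ is independent of $\mathcal{S}^t$ and has zero mean. The other point that must be stated explicitly is that the reweighting $\rho_i^t=\frac{m_i}{Mp_i^t}$ is chosen precisely so that the Horvitz--Thompson-style identity in the scheduling step produces the intended aggregation weights $\frac{m_i}{M}$; absent this reweighting, the estimate would be biased.
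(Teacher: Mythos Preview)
Your proposal is correct and follows essentially the same route as the paper's proof: both eliminate the noise term via $\mathbb{E}[\bm{z}^t]=\bm{0}$, then use the scheduling expectation to cancel $p_i^t$ against the reweighting so that the aggregation weights become $m_i/M$, and finally invoke Assumption~\ref{sgd} to pass from $\bm{g}_i^t$ to $\nabla f_i(\bm{w}^t)$. Your use of the indicator $\mathbbm{1}\{i\in\mathcal{S}^t\}$ and the explicit nested-expectation argument is a cleaner bookkeeping device than the paper's more compressed derivation, but the substance is identical.
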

\begin{proof} 
We derive the result in \eqref{eq:unbised} as follows:
\begin{equation}
    \mathbb{E} \left[ \hat{\bm{y}}^{t} \right] 
    \overset{(\text{a})}{=} \mathbb{E} \left[ \sum_{i\in \mathcal{S}^t} \frac{m_i}{M p_i^t} \bm{g}^{t}_{i} \right] 
    \overset{(\text{b})}{=} \sum_{i\in \mathcal{N}} \frac{m_i}{M} \mathbb{E} \left[ \bm{g}^{t}_{i} \right] 
    \overset{(\text{c})}{=} \nabla f(\bm{w}^{t}),
\label{eq:app-lemma-1}
\end{equation}
where (a) is due to the zero-mean of the channel noise, i.e., $\mathbb{E}[\bm{z}^t]=0$, (b) follows from that:
\begin{equation}
    \mathbb{E} \left[ \frac{m_i}{M p_i^t} \bm{g}^{t}_{i} \right] = \sum_{j\in \mathcal{N}} p_i^t \frac{m_j}{M p_i^t} \mathbb{E} \left[\bm{g}^{t}_{j}\right] = \sum_{j\in \mathcal{N}} \frac{m_j}{M} \mathbb{E} \left[ \bm{g}^{t}_{j} \right], \forall i\in \mathcal{N},
\end{equation}
and (c) is from Assumption \ref{sgd}.
\end{proof}

Let $\bm{w}^*$ denote the global optimum of $f(\cdot)$, i.e., $\bm{w}^* = \arg\min_{\bm{w}\in\mathbb{R}^D} f(\bm{w})$. 
The local objectives of many ML problems, such as those involving convolutional neural networks (CNNs), are non-convex.
For these problems, the learned model may converge to a local minimum or a saddle point.
Following \cite{bottou2018optimization,wang2021cooperative,barakat2020convergence}, we consider an algorithm to have achieved convergence if it converges to a stationary point of the global loss function, i.e., if its expected squared gradient norm $\min_{t\in[T]} \mathbb{E}\left[ \left\| \nabla f(\bm{w}^t) \right\|_2^2 \right]$ is zero.
In the following theorem, we show the convergence of the proposed PO-FL framework for general ML problems, including those with non-convex local objectives.

\begin{theorem}\label{thm:non-convex}
Let $\gamma_T \triangleq \sum_{t=0}^{T-1} \eta^t$.
With Assumptions \ref{smooth}-\ref{sgd2} and Lemma \ref{unbiased}, if the learning rates $\eta^t$ satisfy $\eta^t \leq \frac{1}{L}, \forall t\in [T]$, we have:
\begin{equation}
\begin{split}
    &\min_{t\in[T]} \mathbb{E}\left[ \left\| \nabla f(\bm{w}^t) \right\|_2^2 \right] \\
    \leq & \frac{2}{\gamma_T}  \left( \mathbb{E}[f(\bm{w}^{0})] - f(\bm{w}^{*}) \right) + \frac{L}{\gamma_T}  \sum_{t=0}^{T-1} (\eta^t)^2 \left( 1+\frac{1}{\alpha} \right) \sigma^2 \\
    & + \frac{L}{\gamma_T}  \sum_{t=0}^{T-1} (\eta^t)^2 \left[ (1+\alpha) \mathbb{E}[e_{\text{com}}^{t}] + \left( 1+\frac{1}{\alpha} \right) \mathbb{E}[e_{\text{var}}^{t}] \right],
\end{split}
\label{eq:non-convex}
\end{equation}
where $\alpha>0$ determines the weight between the communication distortion and global update variance, $e_{\text{com}}^{t}$ is the communication distortion given in \eqref{eq:mse-detail}, $e_{\text{var}}^{t} \triangleq \left\| \sum_{i\in \mathcal{S}^t} \frac{m_i}{M p_i^t} \bm{g}^{t}_{i} \right.$ $\left.- |\mathcal{S}^t| \sum_{j\in \mathcal{N}} \frac{m_j}{M} \bm{g}^{t}_{j} \right\|_2^2$ is the global update variance, and the expectations are taken with respect to the noise distribution, mini-batch data sampling, and device scheduling.
\end{theorem}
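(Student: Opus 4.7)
The plan is to combine the standard $L$-smoothness descent inequality for non-convex SGD with a three-way decomposition of the update error $\hat{\bm{y}}^t - \nabla f(\bm{w}^t)$ into channel noise, device-sampling variance, and mini-batch variance. First I would apply Assumption \ref{smooth} and the update rule \eqref{eq:new-model} to obtain the one-step inequality
\begin{equation*}
f(\bm{w}^{t+1}) \leq f(\bm{w}^t) - \eta^t \langle \nabla f(\bm{w}^t), \hat{\bm{y}}^t \rangle + \tfrac{L(\eta^t)^2}{2}\|\hat{\bm{y}}^t\|_2^2,
\end{equation*}
take expectation over all randomness in round $t$, and invoke Lemma \ref{unbiased} to turn the inner product into $\|\nabla f(\bm{w}^t)\|_2^2$. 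Using $\mathbb{E}[\|\hat{\bm{y}}^t\|_2^2]=\|\nabla f(\bm{w}^t)\|_2^2+\mathbb{E}[\|\hat{\bm{y}}^t-\nabla f(\bm{w}^t)\|_2^2]$ together with the step-size condition $\eta^t\leq 1/L$, the $\|\nabla f(\bm{w}^t)\|_2^2$ terms combine into $-\tfrac{\eta^t}{2}\|\nabla f(\bm{w}^t)\|_2^2$, which is the quantity I want to telescope.

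Next I would control the residual second moment. Using the explicit form \eqref{eq:yt-2}, split
\begin{equation*}
\hat{\bm{y}}^t - \nabla f(\bm{w}^t) = \underbrace{\tfrac{\sqrt{V_{\bm{g}}^t}}{a^t}\bm{z}^t}_{\text{channel noise}} + \underbrace{\sum_{i\in\mathcal{S}^t}\tfrac{m_i}{Mp_i^t}\bm{g}_i^t - \nabla f(\bm{w}^t)}_{\text{aggregation error}},
\end{equation*}
and apply Young's inequality with parameter $\alpha>0$ to pull out a $(1+\alpha)$-weighted channel term and a $(1+1/\alpha)$-weighted aggregation term. The channel term has expectation $(1+\alpha)\mathbb{E}[e_{\text{com}}^t]$ by \eqref{eq:mse-detail}. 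For the aggregation term, I would add and subtract $|\mathcal{S}^t|\sum_{j\in\mathcal{N}}\frac{m_j}{M}\bm{g}_j^t$, conditioning on the mini-batch draws first so that the device-selection variance matches $\mathbb{E}[e_{\text{var}}^t]$ exactly as defined, and then take the outer expectation over the mini-batches to bound the residual stochastic-gradient piece by $\sigma^2$ via Assumption \ref{sgd}; both of these inherit the $(1+1/\alpha)$ prefactor.

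Substituting these bounds back into the descent inequality, telescoping from $t=0$ to $T-1$, dividing by $\gamma_T/2$, and using the elementary fact $\min_t\mathbb{E}[\|\nabla f(\bm{w}^t)\|_2^2]\leq\frac{1}{\gamma_T}\sum_t\eta^t\mathbb{E}[\|\nabla f(\bm{w}^t)\|_2^2]$, together with $f(\bm{w}^T)\geq f(\bm{w}^*)$, yields \eqref{eq:non-convex}.

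The main obstacle I expect is the aggregation-term decomposition in the second step. Because $|\mathcal{S}^t|$ is itself random under the probabilistic scheduling and appears inside the definition of $e_{\text{var}}^t$, the conditioning order matters: one must condition on the realized mini-batches (so that the $\bm{g}_i^t$ are frozen) before taking the device-sampling expectation, and only then average over the mini-batches to convert the remaining variance into $\sigma^2$. Handling the stochastic-gradient noise without an extra Young step — so that $\sigma^2$ picks up exactly the coefficient $(1+1/\alpha)$ rather than a product of two Young constants — is the delicate point, and requires keeping the channel-noise Young split as the only multiplicative loss in the chain.
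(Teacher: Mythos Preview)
Your strategy is essentially the paper's: $L$-smoothness descent, Lemma \ref{unbiased} to collapse the inner product, the bias--variance identity $\mathbb{E}\|\hat{\bm{y}}^t\|_2^2=\|\nabla f(\bm{w}^t)\|_2^2+\mathbb{E}\|\hat{\bm{y}}^t-\nabla f(\bm{w}^t)\|_2^2$, one Young split with parameter $\alpha$ to peel off the channel-noise term, then an exact (cross-term-vanishing) decomposition of the aggregation error into device-sampling variance and mini-batch variance, telescope, and the weighted-min inequality. Your observation that a \emph{second} Young step would put the wrong constant on $\sigma^2$, and that orthogonality of the two error pieces under the right conditioning is what avoids it, is precisely the mechanism the paper uses.

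The one step that would fail as written is your choice of centering term. The paper's proof adds and subtracts $\sum_{j\in\mathcal{N}}\tfrac{m_j}{M}\bm{g}_j^t$, \emph{without} the factor $|\mathcal{S}^t|$; with this choice the residual is the pure mini-batch error $\sum_{j}\tfrac{m_j}{M}\bigl(\bm{g}_j^t-\nabla f_j(\bm{w}^t)\bigr)$, which is independent of device selection, has second moment $\leq\sigma^2$ by Assumption \ref{sgd}, and is orthogonal to the device-sampling piece once you condition on the mini-batches (so the cross term drops exactly). If you instead center at $|\mathcal{S}^t|\sum_j\tfrac{m_j}{M}\bm{g}_j^t$ as in the theorem's displayed definition of $e_{\text{var}}^t$, the leftover $|\mathcal{S}^t|\sum_j\tfrac{m_j}{M}\bm{g}_j^t-\nabla f(\bm{w}^t)$ carries the extra bias $(|\mathcal{S}^t|-1)\sum_j\tfrac{m_j}{M}\bm{g}_j^t$; it depends on the random $|\mathcal{S}^t|$, is not mean-zero, is not bounded by $\sigma^2$, and the cross term no longer cancels. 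In the paper's own Appendix \ref{appendix-non-convex} the quantity labelled $\mathbb{E}[e_{\text{var}}^t]$ is $\mathbb{E}\big[\big\|\sum_{i\in\mathcal{S}^t}\tfrac{m_i}{Mp_i^t}\bm{g}_i^t-\sum_{j\in\mathcal{N}}\tfrac{m_j}{M}\bm{g}_j^t\big\|_2^2\big]$ without the $|\mathcal{S}^t|$ factor, so treat that factor in the statement as a typo and run your argument with the uncentered version.
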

\begin{proof}
Please refer to Appendix \ref{appendix-non-convex}.
\end{proof}

In \eqref{eq:non-convex}, $\alpha$ balances the weight between the communication distortion and global update variance.
Specifically, a larger value of $\alpha$ places greater emphasis on reducing the communication distortion rather than global update variance, while a smaller $\alpha$ places more weight on preserving important gradient vectors.
The selection of $\alpha$ depends on several factors, including the specific training task, communication conditions, and device capabilities.
While it is difficult to provide a universal guideline for selecting the optimal value of $\alpha$, we will analyze the effect of $\alpha$ on training performance via simulations in Section \ref{sec:simulation} and provide practical insight into selecting an appropriate $\alpha$.

From Theorem \ref{thm:non-convex}, we obtain an upper bound for $\min_{t\in[T]} \mathbb{E}\left[ \left\| \nabla f(\bm{w}^t) \right\|_2^2 \right]$, which increases with both the communication distortion $\mathbb{E}[e_{\text{com}}^{t}]$ and the global update variance $\mathbb{E}[e_{\text{var}}^{t}]$.
According to \cite{bottou2018optimization}, the larger this upper bound is, the more communication rounds are required for convergence. Therefore, we can minimize both factors in order to accelerate the learning convergence.
Based on Theorem \ref{thm:non-convex}, we can further characterize the convergence behavior of PO-FL with non-convex local loss functions in the following corollary.

\begin{corollary}\label{corollary2}
If the learning rates satisfy $\lim_{T\rightarrow \infty} \gamma_T = \infty$ and $\lim_{T\rightarrow \infty} \sum_{t=0}^{T-1} (\eta^t)^2 < \infty$, the right-hand-side (RHS) of \eqref{eq:non-convex} converges to zero as $T \rightarrow \infty$.
\end{corollary}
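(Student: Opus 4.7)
The plan is to decompose the RHS of \eqref{eq:non-convex} into three pieces and show each vanishes as $T \to \infty$. Denote the three terms, in order, by $A_T$, $B_T$, and $C_T$. The first two fall directly from the hypotheses on the learning-rate schedule, while the third requires a uniform boundedness argument for $\mathbb{E}[e_{\text{com}}^{t}]$ and $\mathbb{E}[e_{\text{var}}^{t}]$.

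For $A_T = \frac{2}{\gamma_T}(\mathbb{E}[f(\bm{w}^{0})] - f(\bm{w}^{*}))$, the numerator is a fixed finite constant (independent of $T$), so $A_T \to 0$ follows immediately from $\gamma_T \to \infty$. For $B_T = \frac{L(1+1/\alpha)\sigma^{2}}{\gamma_T}\sum_{t=0}^{T-1}(\eta^t)^2$, the second hypothesis forces the numerator to stay bounded by a constant (since $\sum_{t=0}^{\infty}(\eta^t)^2 < \infty$), and again the denominator diverges, so $B_T \to 0$.

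The main work sits in $C_T$. I would first establish uniform bounds $\mathbb{E}[e_{\text{com}}^{t}] \le E_1$ and $\mathbb{E}[e_{\text{var}}^{t}] \le E_2$ for constants $E_1, E_2$ independent of $t$. For $e_{\text{var}}^{t}$, the bounded gradient assumption (Assumption \ref{sgd2}) gives $\|\bm{g}_i^t\|_2 \le G$, and since $|\mathcal{S}^t| \le N$ and the reweighting factors $\frac{m_i}{Mp_i^t}$ are bounded whenever scheduling probabilities are bounded away from zero, a straightforward application of the triangle inequality and $\|\bm{a}+\bm{b}\|_2^2 \le 2\|\bm{a}\|_2^2 + 2\|\bm{b}\|_2^2$ yields a constant bound in terms of $G$, $N$, and $\min_{i,t} p_i^t$. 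For $e_{\text{com}}^{t}$, Lemma \ref{lem:transceiver} gives the closed form $\frac{D \sigma_z^2 V_{\bm{g}}^{t}}{P} \max_{i \in \mathcal{S}^t}\frac{(\rho_i^t)^2}{|h_i^t|^2}$; since $V_{\bm{g}}^{t}$ is a weighted average of per-device empirical variances of bounded-gradient entries and is thus bounded by a constant multiple of $G^2$, and assuming channel gains $|h_i^t|$ and scheduling probabilities $p_i^t$ are bounded away from zero (as is standard in the over-the-air FL literature cited), the $\max$ term is uniformly bounded. Plugging these bounds into $C_T$ yields
\begin{equation*}
C_T \le \frac{L\bigl[(1+\alpha)E_1 + (1+1/\alpha)E_2\bigr]}{\gamma_T}\sum_{t=0}^{T-1}(\eta^t)^2,
\end{equation*}
which is of exactly the same form as $B_T$ and hence tends to zero.

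The main obstacle is the uniform bound on $\mathbb{E}[e_{\text{com}}^{t}]$, because the factor $\max_{i\in\mathcal{S}^t}(\rho_i^t)^2/|h_i^t|^2$ can blow up if some scheduled device has a vanishing probability or a deeply faded channel. I would handle this by invoking the standing assumption (implicit throughout Section \ref{sec:optimization}) that $p_i^t$ is bounded below by some $p_{\min}>0$ and $|h_i^t|$ is bounded below almost surely, after which the remaining arithmetic is routine. Combining $A_T \to 0$, $B_T \to 0$, and $C_T \to 0$ completes the proof.
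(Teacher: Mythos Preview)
Your proposal is correct and follows essentially the same approach as the paper's own proof: establish a uniform-in-$t$ constant bound on $(1+\alpha)\mathbb{E}[e_{\text{com}}^{t}] + (1+1/\alpha)\mathbb{E}[e_{\text{var}}^{t}]$ via Assumption~\ref{sgd2} (the paper bounds $\tilde V_{\bm g}^t\le G^2/D$ and uses the variance identity $\mathbb{E}\|X-\mathbb{E}X\|^2\le\mathbb{E}\|X\|^2$ for $e_{\text{var}}^t$, where you use the triangle inequality, but the outcome is the same), and then invoke the Robbins--Monro conditions on $\{\eta^t\}$. You are in fact more explicit than the paper about the implicit lower bounds on $p_i^t$ and $|h_i^t|$ needed for the constant to exist; the paper simply asserts ``there must exist a positive constant $C$'' at that step.
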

\begin{proof}
Please refer to Appendix \ref{proof-corollary2}.
\end{proof}

From Corollary \ref{corollary2}, we see that PO-FL with non-convex loss functions can converge to a stationary point of the global loss function. However, Corollary \ref{corollary2} cannot guarantee that PO-FL converges to the optimum of the global loss function $f(\cdot)$, i.e., $\bm{w}^*$. Next, we further analyze the convergence of PO-FL in the special case with convex loss functions and show that the learned model is guaranteed to converge to the optimal model $\bm{w}^*$.
Specifically, we adopt the best model throughout the training process with $T$ communication rounds, i.e., $\tilde{\bm{w}}^T = \arg\min_{\bm{w}^t,\forall t\in[T]} \mathbb{E} [f(\bm{w}^t)]$, as the output model of PO-FL.
The PO-FL is deemed as converged if $ \mathbb{E} [f(\tilde{\bm{w}}^T)]- f(\bm{w}^{*})$ becomes zero.
To proceed, we make the following additional assumption on the local loss functions.

\begin{assumption}\label{convex}
(Convexity)
For any $\bm{w}_1,\bm{w}_2\in\mathbb{R}^{D}$, we have:
\begin{equation}
    f_i(\bm{w}_1) - f_i(\bm{w}_2) - \langle \nabla f_i(\bm{w}_1), \bm{w}_1 - \bm{w}_2  \rangle \geq 0, \forall i\in\mathcal{N}.
\end{equation}
\end{assumption}

In the following theorem, we derive an upper bound of $\mathbb{E} [f(\tilde{\bm{w}}^T)]- f(\bm{w}^{*})$ under the PO-FL framework.

\begin{theorem}\label{thm:convex}
With Assumptions \ref{smooth}-\ref{convex} and Lemma \ref{unbiased}, we have:
\begin{align}
    & \mathbb{E} [f(\tilde{\bm{w}}^T)] - f(\bm{w}^{*}) \nonumber \\
    \leq & \frac{1}{2\gamma_T} \sum_{t=0}^{T-1} \mathbb{E} \left[ \left\| \bm{w}^{0} - \bm{w}^* \right\|_2^2 \right]  + \frac{1}{2\gamma_T} \sum_{t=0}^{T-1} (\eta^t)^2 G^2 \nonumber \\
    & + \frac{1}{2\gamma_T} \sum_{t=0}^{T-1} (\eta^t)^2 \left[ (1+\alpha) \mathbb{E}[e_{\text{com}}^{t}] + \left( 1+\frac{1}{\alpha} \right) \mathbb{E}[e_{\text{var}}^{t}] \right], \label{eq:convex}
\end{align}
where the expectations are taken with respect to the noise distribution, mini-batch data sampling, and device scheduling.
\end{theorem}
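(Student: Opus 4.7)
The plan is to adapt the classical convex-SGD convergence argument to the PO-FL update rule \eqref{eq:new-model}, with the stochastic search direction $\hat{\bm{y}}^t$ that absorbs both the device-scheduling and channel-noise randomness. I would start by expanding $\|\bm{w}^{t+1}-\bm{w}^*\|_2^2 = \|\bm{w}^t-\bm{w}^*\|_2^2 - 2\eta^t \langle \hat{\bm{y}}^t, \bm{w}^t-\bm{w}^*\rangle + (\eta^t)^2 \|\hat{\bm{y}}^t\|_2^2$, taking the conditional expectation given $\bm{w}^t$, and invoking Lemma \ref{unbiased} to reduce the cross term to $-2\eta^t \langle \nabla f(\bm{w}^t), \bm{w}^t-\bm{w}^*\rangle$. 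Assumption \ref{convex} then yields $\langle \nabla f(\bm{w}^t), \bm{w}^t-\bm{w}^*\rangle \geq f(\bm{w}^t)-f(\bm{w}^*)$, producing a per-round progress inequality of the form $2\eta^t (\mathbb{E}[f(\bm{w}^t)] - f(\bm{w}^*)) \leq \mathbb{E}\|\bm{w}^t-\bm{w}^*\|_2^2 - \mathbb{E}\|\bm{w}^{t+1}-\bm{w}^*\|_2^2 + (\eta^t)^2 \mathbb{E}\|\hat{\bm{y}}^t\|_2^2$.

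The key technical step is to control $\mathbb{E}\|\hat{\bm{y}}^t\|_2^2$ so that it decomposes into the three terms appearing on the RHS of \eqref{eq:convex}. Writing $\bm{y}^t = \sum_{i\in\mathcal{S}^t} \frac{m_i}{M p_i^t}\bm{g}_i^t$ and $\bar{\bm{g}}^t = \sum_{j\in\mathcal{N}} \frac{m_j}{M}\bm{g}_j^t$, I would first apply the Young-type inequality $\|a+b\|_2^2 \leq (1+\alpha)\|a\|_2^2 + (1+1/\alpha)\|b\|_2^2$ to the splitting $\hat{\bm{y}}^t = (\hat{\bm{y}}^t-\bm{y}^t) + \bm{y}^t$, identifying $\mathbb{E}\|\hat{\bm{y}}^t - \bm{y}^t\|_2^2 = \mathbb{E}[e_{\text{com}}^t]$ by \eqref{eq:mse-detail}. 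Then I would further decompose $\bm{y}^t$ around $|\mathcal{S}^t|\bar{\bm{g}}^t$, so that the deviation piece produces $\mathbb{E}[e_{\text{var}}^t]$ (matching its definition in Theorem \ref{thm:non-convex}), while the residual piece involving $\bar{\bm{g}}^t$ is controlled by the bounded-gradient Assumption \ref{sgd2} (since $\|\bm{g}_i^t\|_2 \leq G$ implies $\|\bar{\bm{g}}^t\|_2 \leq G$) and collapses into the $G^2$ contribution in \eqref{eq:convex}.

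Finally, I would sum the per-round inequality over $t = 0, 1, \dots, T-1$, telescope the squared-distance terms, discard the nonnegative $\mathbb{E}\|\bm{w}^T-\bm{w}^*\|_2^2$, divide both sides by $2\gamma_T$, and invoke the definition $\tilde{\bm{w}}^T = \arg\min_{\bm{w}^t} \mathbb{E}[f(\bm{w}^t)]$ so that $\gamma_T \mathbb{E}[f(\tilde{\bm{w}}^T)] \leq \sum_{t=0}^{T-1} \eta^t \mathbb{E}[f(\bm{w}^t)]$. The main obstacle is the second paragraph: calibrating the intermediate decomposition of $\bm{y}^t$ so that exactly the coefficients $(1+\alpha)\mathbb{E}[e_{\text{com}}^t]$ and $(1+1/\alpha)\mathbb{E}[e_{\text{var}}^t]$ emerge, while the leftover bounded-gradient term cleanly reduces to a single $G^2$ factor rather than trailing a loose $|\mathcal{S}^t|^2$ tail inherited from the $|\mathcal{S}^t|\bar{\bm{g}}^t$ anchor inside $e_{\text{var}}^t$.
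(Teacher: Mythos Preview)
Your overall strategy matches the paper's: expand the squared distance to $\bm{w}^*$, use Lemma~\ref{unbiased} on the cross term, invoke convexity for the progress term, telescope, and bound the output via $\tilde{\bm{w}}^T$. The paper packages the same algebra through a virtual iterate $\bm{v}^{t+1}=\bm{w}^t-\eta^t\bar{\bm{g}}^t$ and the split $\|\bm{w}^{t+1}-\bm{w}^*\|_2^2=\|\bm{w}^{t+1}-\bm{v}^{t+1}\|_2^2+\|\bm{v}^{t+1}-\bm{w}^*\|_2^2$ (the cross term vanishes since $\mathbb{E}[\bm{w}^{t+1}\mid\{\bm{g}_i^t\}]=\bm{v}^{t+1}$), which is algebraically the same as your direct expansion.

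The obstacle you flag is real under the ordering you describe. If you first apply Young to $\hat{\bm{y}}^t=(\hat{\bm{y}}^t-\bm{y}^t)+\bm{y}^t$ and only afterward split $\bm{y}^t$, then whatever $G^2$ contribution emerges from the residual will carry the $(1+1/\alpha)$ prefactor, not the bare $G^2$ in \eqref{eq:convex}. The paper sidesteps this by reversing the order: it first separates $\bar{\bm{g}}^t$ from $\hat{\bm{y}}^t$ via the bias--variance identity (no Young needed, the cross term vanishes by unbiasedness conditional on the gradients), so that in effect $\mathbb{E}\|\hat{\bm{y}}^t\|_2^2=\mathbb{E}\|\hat{\bm{y}}^t-\bar{\bm{g}}^t\|_2^2+\mathbb{E}\|\bar{\bm{g}}^t\|_2^2$. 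The second piece gives the clean $G^2$ via Assumption~\ref{sgd2}, and Young is then applied \emph{only} to $\hat{\bm{y}}^t-\bar{\bm{g}}^t=(\hat{\bm{y}}^t-\bm{y}^t)+(\bm{y}^t-\bar{\bm{g}}^t)$, yielding exactly $(1+\alpha)\mathbb{E}[e_{\text{com}}^t]+(1+1/\alpha)\mathbb{E}[e_{\text{var}}^t]$. In the virtual-sequence language this is just the fact that $\|\bm{v}^{t+1}-\bm{w}^*\|_2^2$ already contains the $(\eta^t)^2\|\bar{\bm{g}}^t\|_2^2$ term before any Young step is taken.

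Regarding the $|\mathcal{S}^t|$ anchor: in the paper's Appendix~\ref{appendix-convex} the quantity under-braced as $\mathbb{E}[e_{\text{var}}^t]$ is $\mathbb{E}\big\|\sum_{i\in\mathcal{S}^t}\frac{m_i}{Mp_i^t}\bm{g}_i^t-\sum_{i\in\mathcal{N}}\frac{m_i}{M}\bm{g}_i^t\big\|_2^2$, i.e., the anchor is $\bar{\bm{g}}^t$ rather than $|\mathcal{S}^t|\bar{\bm{g}}^t$. So the $|\mathcal{S}^t|^2$ tail you worry about does not arise in the proof as written; you should center at $\bar{\bm{g}}^t$ directly.
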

\begin{proof}
Please refer to Appendix \ref{appendix-convex}.
\end{proof}

The following corollary demonstrates that the output model $\tilde{\bm{w}}^T$ in PO-FL with convex loss functions is guaranteed to converge to the optimal model $\bm{w}^{*}$.

\begin{corollary}\label{corollary1}
	For convex loss functions, if the learning rates satisfy $\lim_{T\rightarrow \infty} \gamma_T = \infty$ and $\lim_{T\rightarrow \infty} \sum_{t=0}^{T-1} (\eta^t)^2 < \infty$, the RHS of \eqref{eq:convex} converges to zero as $T \rightarrow \infty$.
\end{corollary}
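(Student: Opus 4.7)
The plan is to decompose the right-hand side of \eqref{eq:convex} into three groups and show that each one vanishes as $T \to \infty$ under the two stated learning-rate conditions. The overall strategy closely parallels the argument used in the proof of Corollary \ref{corollary2} (Appendix \ref{proof-corollary2}), because the bounds in \eqref{eq:non-convex} and \eqref{eq:convex} share essentially the same structure up to the leading constants.

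First, the initialization term $\frac{1}{2\gamma_T}\mathbb{E}[\|\bm{w}^0 - \bm{w}^*\|_2^2]$ has a numerator that is a fixed constant depending only on the initial iterate $\bm{w}^0$ and the optimum $\bm{w}^*$, while the denominator $\gamma_T$ diverges by hypothesis; this term therefore tends to zero. Next, the stochastic-gradient noise term $\frac{G^2}{2\gamma_T}\sum_{t=0}^{T-1}(\eta^t)^2$ has a non-decreasing numerator bounded above by $G^2 \sum_{t=0}^{\infty}(\eta^t)^2$, which is finite by the summability hypothesis, so dividing by the divergent $\gamma_T$ again yields zero.

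The delicate step is the third, aggregation-related, term $\frac{1}{2\gamma_T}\sum_{t=0}^{T-1}(\eta^t)^2\left[(1+\alpha)\mathbb{E}[e_{\text{com}}^t] + (1+1/\alpha)\mathbb{E}[e_{\text{var}}^t]\right]$. To handle it, I would first establish uniform bounds $\sup_t \mathbb{E}[e_{\text{com}}^t] < \infty$ and $\sup_t \mathbb{E}[e_{\text{var}}^t] < \infty$. For $e_{\text{com}}^t$, the closed form in \eqref{eq:mse-detail} shows that it is proportional to $V_{\bm{g}}^t \max_{i \in \mathcal{S}^t} (\rho_i^t)^2 / |h_i^t|^2$; the factor $V_{\bm{g}}^t$ is a convex combination of empirical variances of gradient entries whose norms are controlled by Assumption \ref{sgd2}, and under the working non-degeneracy condition that the scheduling probabilities and channel magnitudes are bounded away from zero, the max-ratio factor is uniformly bounded. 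For $e_{\text{var}}^t$, I would expand the squared norm, apply the triangle and Cauchy--Schwarz inequalities, and use Assumption \ref{sgd2} to bound each $\|\bm{g}_i^t\|_2^2$ by $G^2$, absorbing the reciprocal probabilities and the cardinality $|\mathcal{S}^t| \leq N$ into a single constant. With such a uniform bound $C$ in hand, the third piece is dominated by $\frac{C'}{2\gamma_T}\sum_{t=0}^{T-1}(\eta^t)^2$ and vanishes by the same reasoning as the second term.

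The main obstacle is establishing the uniform boundedness of $\mathbb{E}[e_{\text{com}}^t]$ and $\mathbb{E}[e_{\text{var}}^t]$, since Assumptions \ref{smooth}--\ref{convex} do not explicitly prevent the scheduling probabilities $p_i^t$ or channel gains $|h_i^t|$ from becoming arbitrarily small. This is the same subtlety that arises in Corollary \ref{corollary2}; I expect to resolve it by invoking the same non-degeneracy argument on the scheduling policy and channel statistics used there. Once that step is handled, the rest of the proof reduces to routine division by $\gamma_T$ and limit-taking term by term.
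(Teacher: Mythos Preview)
Your proposal is correct and follows essentially the same approach as the paper: invoke the uniform bound \eqref{eq:C} on $(1+\alpha)\mathbb{E}[e_{\text{com}}^t] + (1+1/\alpha)\mathbb{E}[e_{\text{var}}^t]$ established in Appendix~\ref{proof-corollary2}, then use $\gamma_T \to \infty$ and $\sum_t (\eta^t)^2 < \infty$ to kill each term. Your term-by-term treatment is in fact cleaner than the paper's Appendix~\ref{proof-corollary1}, which carries some stray artifacts (an unexplained $(1-\mu\eta^0)$ factor and a misstated limit), but the underlying argument is identical.
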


\begin{proof}
Please refer to Appendix \ref{proof-corollary1}.
\end{proof}

From the above analysis, we see that the PO-FL framework achieves convergence in both convex and non-convex cases. 
Notably, the output model $\tilde{\bm{w}}^{T}$ in the case with convex loss functions can achieve an expected squared gradient norm of zero, i.e., $\mathbb{E}\left[ \left\| \nabla f(\tilde{\bm{w}}^{T}) \right\|_2^2 \right] \rightarrow 0$ when $T\rightarrow \infty$.
Therefore, the analytical results in Theorem \ref{thm:non-convex} and Corollary \ref{corollary2} are also applicable to this case.
Moreover, similar to Theorem \ref{thm:non-convex}, we can conclude from Theorem \ref{thm:convex} that the convergence of PO-FL with convex loss functions is also significantly affected by the communication distortion $\mathbb{E}[e_{\text{com}}^{t}]$ and the global update variance $\mathbb{E}[e_{\text{var}}^{t}]$, which are both dependent on the device scheduling.
This motivates us to consider these two factors as key metrics of the training performance and jointly minimize them by optimizing the device scheduling, as detailed in the next section.

\section{Channel and Gradient-Importance Aware Device Scheduling}\label{sec:optimization}

Based on the analytical results in Section \ref{sec:convergence}, we formulate an optimization problem of device scheduling to improve the training performance as follows:
\begin{align}
    (\text{P1}): \min_{\{p_{i}^{t}\}} & \; (1+\alpha) \mathbb{E}[ e_{\text{com}}^{t} ] + \left( 1+\frac{1}{\alpha} \right) \mathbb{E}[  e_{\text{var}}^{t} ], \label{eq:obj} \\
    \text{s.t. } 
    & 0 < p_{i}^{t} \leq 1, \forall i \in \mathcal{N}.
\end{align}

Note that since the objective \eqref{eq:obj} in Problem (P1) involves expectations with respect to $\mathcal{S}_{t}$, in order to derive an explicit expression, we may first obtain the probability distribution of $\mathcal{S}^{t}$ in terms of $\{p_i^t\}$.
However, as there are $\tbinom{N}{|\mathcal{S}^t|} = \frac{N!}{|\mathcal{S}^t|!(N-|\mathcal{S}^t|)!}$ possible scheduled device sets, calculating the expectations of $e_{\text{com}}^{t}$ and $e_{\text{var}}^{t}$ requires substantial computation overhead. Also, it is hard to verify its convexity so that finding the optimal solution of Problem (P1) is in general NP-hard.

To efficiently solve Problem (P1), we develop a channel and gradient-importance aware device scheduling algorithm in the following, which introduces the cardinality of $\mathcal{S}_{t}$ as a predetermined hyper-parameter. As such, we may optimize $\{p_{i}^{t}\}$ with low complexity and adopt a \emph{sampling without replacement} approach to determine the set of scheduled devices.\footnote{While the determination of $|\mathcal{S}^t|$ is beyond the scope of this paper, in general it should ensure sufficient gradient uploading while avoiding large communication distortion. The impact of $|\mathcal{S}^t|$ will be discussed in Section \ref{sec:simulation}.}
Specifically, we first calculate the optimal scheduling probabilities for single-device scheduling in Section \ref{sec:opt-single}. 
This step serves as an initialization process in the proposed design. 
Subsequently, in Section \ref{sec:opt-multi}, we employ a sampling without replacement approach to schedule the remaining devices. This complements the single-device scheduling and completes the overall scheduling process.

\subsection{Single-Device Scheduling}\label{sec:opt-single}

Suppose that only one device is scheduled in each communication round, i.e., $|\mathcal{S}^t|=1$, which can be expressed using scheduling probabilities as follows:
\begin{equation}
    \mathbb{E}[|\mathcal{S}^t|] = \sum_{i\in \mathcal{N}} p_{i}^{t} = 1.
\end{equation}
The objective in Problem (P1) can be expressed as follows:
\begin{align}
    & (1+\alpha) \mathbb{E}[ e_{\text{com}}^{t} ] + \left( 1+\frac{1}{\alpha} \right) \mathbb{E}[  e_{\text{var}}^{t} ] \nonumber \\
    =& (1+\alpha) \sum_{i=1}^{N} \frac{D \sigma_z^2 \tilde{V}_{\bm{g}}^{t} }{p_{i}^{t} P |h^{t}_{i}|^2} \left(\frac{m_i}{M} \right)^2 \nonumber \\
    & + \left( 1+\frac{1}{\alpha} \right) \sum_{i=1}^{N} \left( \frac{1}{p_{i}^{t}} - 1 \right) \left(\frac{m_i}{M} \right)^2 \| \bm{g}^{t}_{i} \|_2^2,
    \label{eq:obj2}
\end{align}
where $\tilde{V}_{\bm{g}}^{t} = \sum_{i\in \mathcal{N}} \frac{m_i}{M} V_{i}^{t}$.
Therefore, Problem (P1) can be simplified as:

\begin{align}
    (\text{P2}):
    \min_{\{p_{i}^{t}\}} \; &
    (1+\alpha) \sum_{i=1}^{N} \frac{D \sigma_z^2 \tilde{V}_{\bm{g}}^{t} }{p_{i}^{t} P |h^{t}_{i}|^2} \left(\frac{m_i}{M} \right)^2 \nonumber \\
    & + \left( 1+\frac{1}{\alpha} \right) \sum_{i=1}^{N} \left( \frac{1}{p_{i}^{t}} - 1 \right) \left(\frac{m_i}{M} \right)^2 \| \bm{g}^{t}_{i} \|_2^2, \\
    \text{s.t. } & \sum_{i\in \mathcal{N}} p_{i}^{t} = 1, \label{eq:contraint1} \\ 
    & 0 < p_{i}^{t} \leq 1, \forall i \in \mathcal{N}. \label{eq:con2}
\end{align}

Evidently, Problem (P2) is a convex problem. According to the Karush–Kuhn–Tucker (KKT) conditions \cite{boyd2004convex}, we obtain the optimal solution for (P2) as:
\begin{equation}
    p_{i}^{t} 
    = \frac{Q_{i}^{t}}{\sum_{j\in \mathcal{N}} Q^t_j}, \forall i \in \mathcal{N},
\label{eq:single}
\end{equation}
where $Q_i^t$ is defined as
\begin{equation}
    Q_{i}^{t} \triangleq \sqrt{ (1+\alpha) \frac{\tilde{V}_{\bm{g}}^{t} D \sigma_z^2 m_i^2}{P|h^{t}_{i}|^2 M^2} + \left( 1+\frac{1}{\alpha} \right) \frac{m_i^2 \| \bm{g}^{t}_{i} \|_2^2}{M^2}}.
\label{eq:metric}
\end{equation}
Once the scheduling probabilities are computed, the server can schedule one device for this communication round according to \eqref{eq:single}.

\begin{remark}\label{remark}
The scheduling probabilities derived in \eqref{eq:single} achieve a trade-off between reducing the communication distortion and preserving important gradient vectors.
On one hand, $Q_{i}^{t}$ is related to the term $\frac{\tilde{V}_{\bm{g}}^{t} D \sigma_z^2 m_i^2}{P|h^{t}_{i}|^2 M^2}$, which corresponds to the communication distortion $\mathbb{E}[e_{\text{com}}^t]$.
By assigning higher probabilities $\{p_{i}^{t}\}$ to devices with weak channel conditions, the aggregation weights are reduced, thereby avoiding significant communication distortion.
On the other hand, the term $\frac{m_i^2 \| \bm{g}^{t}_{i} \|_2^2}{M^2}$ in $Q_{i}^{t}$ implies that the device with a larger gradient norm should be assigned with a higher probability $p_{i}^{t}$. This ensures that the important gradient vectors are preserved for aggregation.
\end{remark}

\subsection{Multi-Device Scheduling}\label{sec:opt-multi}

To develop a low-complexity solution for Problem (P1), we schedule a set of devices $\mathcal{S}^t$ by sampling a single device without replacement repeatedly for $|\mathcal{S}^t|$ times.
Specifically, we schedule the first device according to the scheduling probabilities $\{p_{i}^{t}\}$ derived in \eqref{eq:single}, and then select the subsequent $|\mathcal{S}^t|-1$ devices successively as follows.

At the $k$-th selection, we have already scheduled $k-1$ devices ($2 \leq k \leq |\mathcal{S}^t|$), and the set of scheduled device indices is denoted by $\{Y_{t, 1},\dots,Y_{t, j},\dots,Y_{t, k-1} \}$, where $Y_{t, j} \in \mathcal{N}$ is the index of the $j$-th scheduled device.
We exclude the devices that are already scheduled and recalculate the scheduling probabilities to satisfy \eqref{eq:contraint1} as follows:
\begin{equation}
    q_{i}^{t} =\left\{\begin{array}{ll} 
    0, & \text { if } i \in\left\{Y_{t,1},\dots,Y_{t,k-1}\right\}, \\
    \frac{p_{i}^{t}}{1-\sum_{j=1}^{k-1} p_{Y_{t,j}}^{t}}, & \text { otherwise. }
    \end{array}\right.
\label{eq:q_i}
\end{equation}
Using the scheduling probabilities $\{q_{i}^{t}\}$ from \eqref{eq:q_i}, we select the $k$-th device $Y_{t, k}$ and include it in $\mathcal{S}^t$.
The above process is repeated until $|\mathcal{S}^t|$ devices are scheduled.
Hence, the aggregated gradient from the devices in $\mathcal{S}^t$ is computed as follows:
\begin{equation}
    \bm{g}^{t} = \frac{1}{|\mathcal{S}^t|} \sum_{k=1}^{|\mathcal{S}^t|} \frac{m_i}{M q^{t}_{Y_{t,k}}} \bm{g}^{t}_{Y_{t,k}}.
    \label{eq:multi}
\end{equation}

It is worth noting that the aggregated gradient in \eqref{eq:multi} is still an unbiased estimate of the global gradient $\nabla f(\bm{w}^{t})$. The proof is similar to that of \cite[Lemma~1]{renjinke} and is omitted for brevity.
Our proposed device scheduling approach takes into account both the channel condition and gradient importance. 
By adopting this approach, we ensure that the scheduling probabilities for each device are recomputed to reflect the exclusion of previously scheduled devices.
The proposed method provides closed-form solutions of scheduling probabilities and avoids high-complexity searching.

\begin{remark}\label{remark2}
    We note that the proposed PO-FL framework can be adapted to the existing probabilistic device scheduling solutions, including: 1) importance-aware scheduling \cite{zhang2022communication}: devices with larger gradient importance $\frac{m_i}{M} \|\bm{g}_i^t\|_2$ are preferred, which gives the single-device scheduling probabilities by $p^t_i = \frac{ \frac{m_i}{M} \| \bm{g}^{t}_{i} \|_2 }{\sum_{j\in \mathcal{N}} \frac{m_j}{M} \| \bm{g}^{t}_{j} \|_2}, \forall i \in \mathcal{N}$; 2) channel-aware scheduling \cite{ma2021user,amiri2021convergence}: devices with higher channel quality $|h^{t}_{i}|^2$ are preferred, which gives the single-device scheduling probabilities by $p^t_i = \frac{ |h^{t}_{i}|^2 }{\sum_{j\in \mathcal{N}} |h^{t}_{j}|^2}, \forall i \in \mathcal{N}$.
    However, we emphasize that our proposed approach in \eqref{eq:single} outperforms these two strategies. On one hand, the importance-aware scheduling strategy overlooks the communication quality and may lead to large aggregation distortion. On the other hand, the channel-aware device scheduling strategy assigns higher probabilities to devices with better channel conditions, which weakens the transmit signal $\frac{m_i a^t}{M p_i^t h_{i}^{t}} \bm{s}_i^t$ for these devices and makes the model aggregation more vulnerable to the channel noise. In contrast, our proposed scheduling approach considers both the gradient importance and communication quality to embrace their respective advantages. Moreover, we assign smaller probabilities to the devices with better channel gains to amplify their transmit signals to be more resistant to channel noise. By doing so, we can maintain the training performance improvement even under large channel noise, as verified by simulation results in the next section.
\end{remark}

\section{Simulation Results}\label{sec:simulation}

\subsection{Simulation Setup}\label{sec:setup}

We simulate an edge network consisting of one server and $N=30$ devices.
The devices are uniformly distributed around the server at a distance of $d_i$ between $10$ to $50$ meters.
The wireless channels from the devices to the server follow the IID Rayleigh fading model, i.e., $h_i^t = \sqrt{g_i} \lambda_i^t, \forall i \in \mathcal{N},$ where $g_i$ denotes the path loss and $\lambda_i^t \sim \mathcal{CN}(0,1)$.
The path loss follows the free-space model $g_i = G_0\left( \frac{3\times 10^{8}}{4\pi f_0 d_i} \right)^{PL}$, where $G_0=4.11$, $f_0=915$ MHz, and $PL=3.76$ denote the antenna gain, the carrier frequency, and the path loss exponent, respectively\cite{lin2022relay}.
We set the transmit power budget $P=1$ W and the noise power $\sigma_z^2 = 10^{-11}$ W unless otherwise stated.

We consider the image classification task on the MNIST \cite{mnist} and CIFAR-10 \cite{cifar10} datasets.
The MNIST dataset contains ten classes of $M=60,000$ training data samples, while the CIFAR-10 dataset contains ten classes of $M=50,000$ training data samples.
We assign each device two random classes of $\left\lfloor\frac{M}{N}\right\rfloor$ data samples. 
Specifically, we sort $M$ training samples by their labels, divide them into $60$ shards of size $\left\lfloor\frac{M}{2N}\right\rfloor$, and assign each device two random shards \cite{li2021silo,sun2021semi}.
We train a logistic regression model on the MNIST dataset to evaluate the effectiveness of our proposed device scheduling method in the convex setting, and train a CNN with four convolutional layers\footnote{This CNN architecture is adapted from \url{https://github.com/dmholtz/cnn-cifar10-pytorch} and achieves an accuracy of 66.6\% on the CIFAR-10 dataset through centralized training.} on the CIFAR-10 dataset in the non-convex case.
The batch size is set as $10$, and we adopt a decaying learning rate $\eta^t = \max \{\eta^0 \times 0.95^{t}, 10^{-5} \} $, where the initial learning rate $\eta^0 = 0.1$ for the MNIST dataset and $\eta^0 = 0.5$ for the CIFAR-10 dataset.
Unless otherwise specified, we set $\alpha = 0.1$ and $|\mathcal{S}^t| = 10, \forall t\in [T] $.
The numerical results are averaged over $10$ independent trials.
We compare the proposed device scheduling policy with the following baselines:
\begin{itemize}
    \item \textbf{Deterministic scheduling}: The server randomly selects a subset of devices $\mathcal{S}^t$ and directly aggregates their gradients as $\sum_{i\in \mathcal{S}^t} \frac{m_i}{\sum_{j\in \mathcal{S}^t} m_j} \bm{g}^{t}_{i}$. This approach does not take into account the gradient importance or communication channel quality of individual devices, and instead relies solely on random selection.
    \item \textbf{Importance-aware scheduling}: Under the PO-FL framework, this approach prioritizes devices with larger values of the local gradient norms. Specifically, the scheduling probability for device $i$ is $p^t_i = \frac{ \frac{m_i}{M} \| \bm{g}^{t}_{i} \|_2 }{\sum_{j\in \mathcal{N}} \frac{m_j}{M} \| \bm{g}^{t}_{j} \|_2}$.
    \item \textbf{Channel-aware scheduling}: In this approach, devices with better channel conditions are given higher probabilities of being scheduled under the PO-FL framework. Specifically, the scheduling probability for device $i$ is $p^t_i = \frac{ |h^{t}_{i}|^2 }{\sum_{j\in \mathcal{N}} |h^{t}_{j}|^2}$.
    \item \textbf{Noise-free scheduling}: This method operates under the PO-FL framework and assumes an idealized scenario without channel noise. In this case, the scheduling probabilities are obtained by solving Problem (P2) with $\sigma_z^2=0$. Although this cannot be a realistic scenario, it provides a performance benchmark for other methods.
\end{itemize}

\subsection{Results}

\subsubsection{Single-Device Scheduling}
To verify the effectiveness of the proposed metrics in \eqref{eq:single}, we present the test accuracy of various device scheduling methods when only one device is scheduled in each communication round in Fig. \ref{fig:single}.
We observe that the deterministic scheduling method results in poor training performance, particularly when tackling the more challenging CIFAR-10 dataset.
In contrast, the proposed scheduling policy achieves the fastest convergence and performs closely to the idealized noise-free case.
Notably, the importance-aware scheduling method exhibits degraded learning performance due to communication distortion incurred by AirComp.
Interestingly, the PO-FL framework fails to converge when using the channel-aware scheduling method.
This is because assigning lower probabilities to devices with worse channel conditions increases the aggregation weights $\left\{\frac{m_i}{M p_i^t}\right\}$, leading to larger communication distortion.
In contrast, the proposed channel and gradient-importance aware device scheduling method assigns larger aggregation weights to devices with better channel conditions, thereby alleviating the negative effects of wireless fading and channel noise.

\begin{figure}[t]
  \centering
  \begin{subfigure}[b]{0.8\columnwidth}
    \centering
    \includegraphics[width=\textwidth]{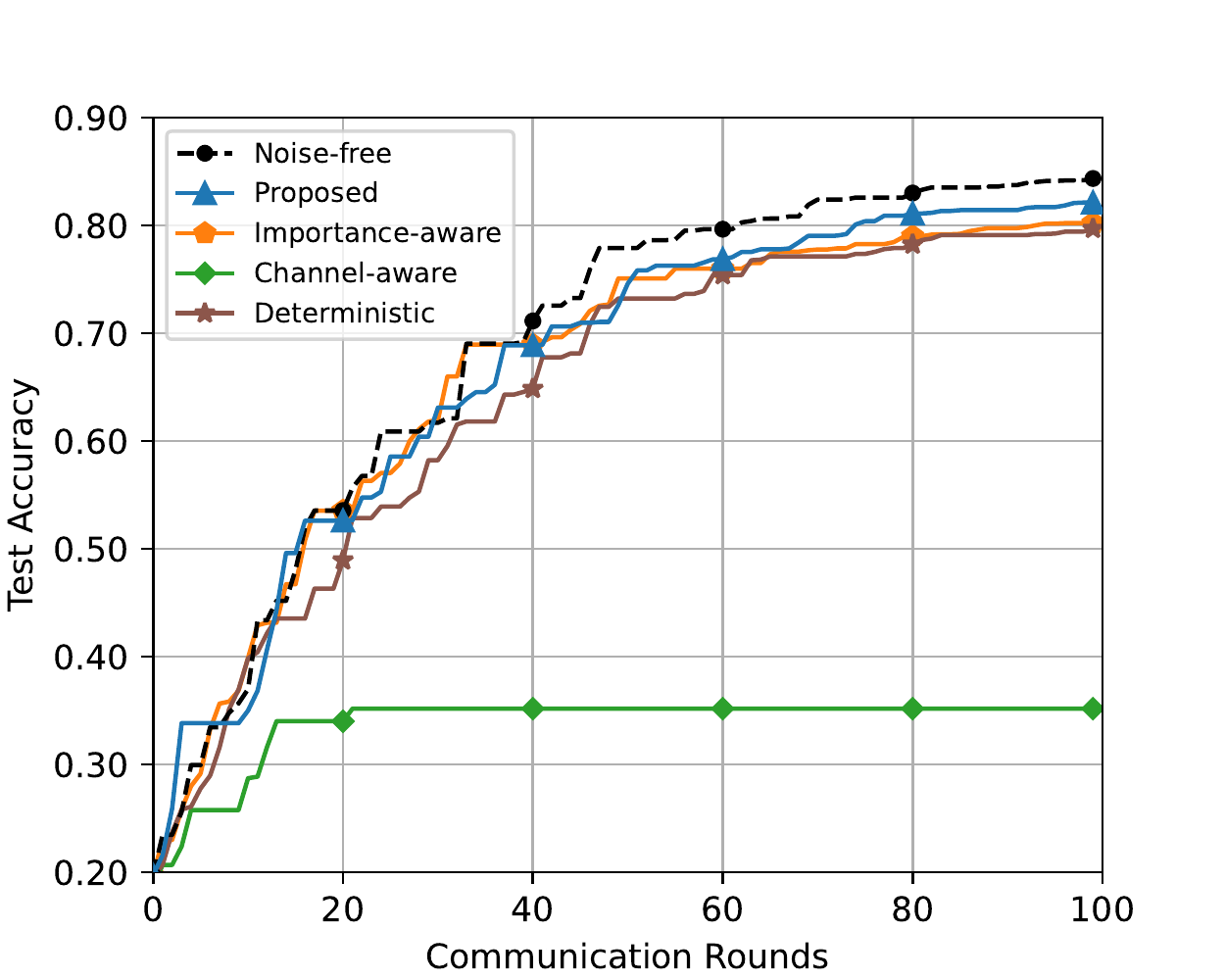}
    \caption{MNIST}
    \label{fig:3-1}
  \end{subfigure}
  \begin{subfigure}[b]{0.8\columnwidth}
    \centering
    \includegraphics[width=\textwidth]{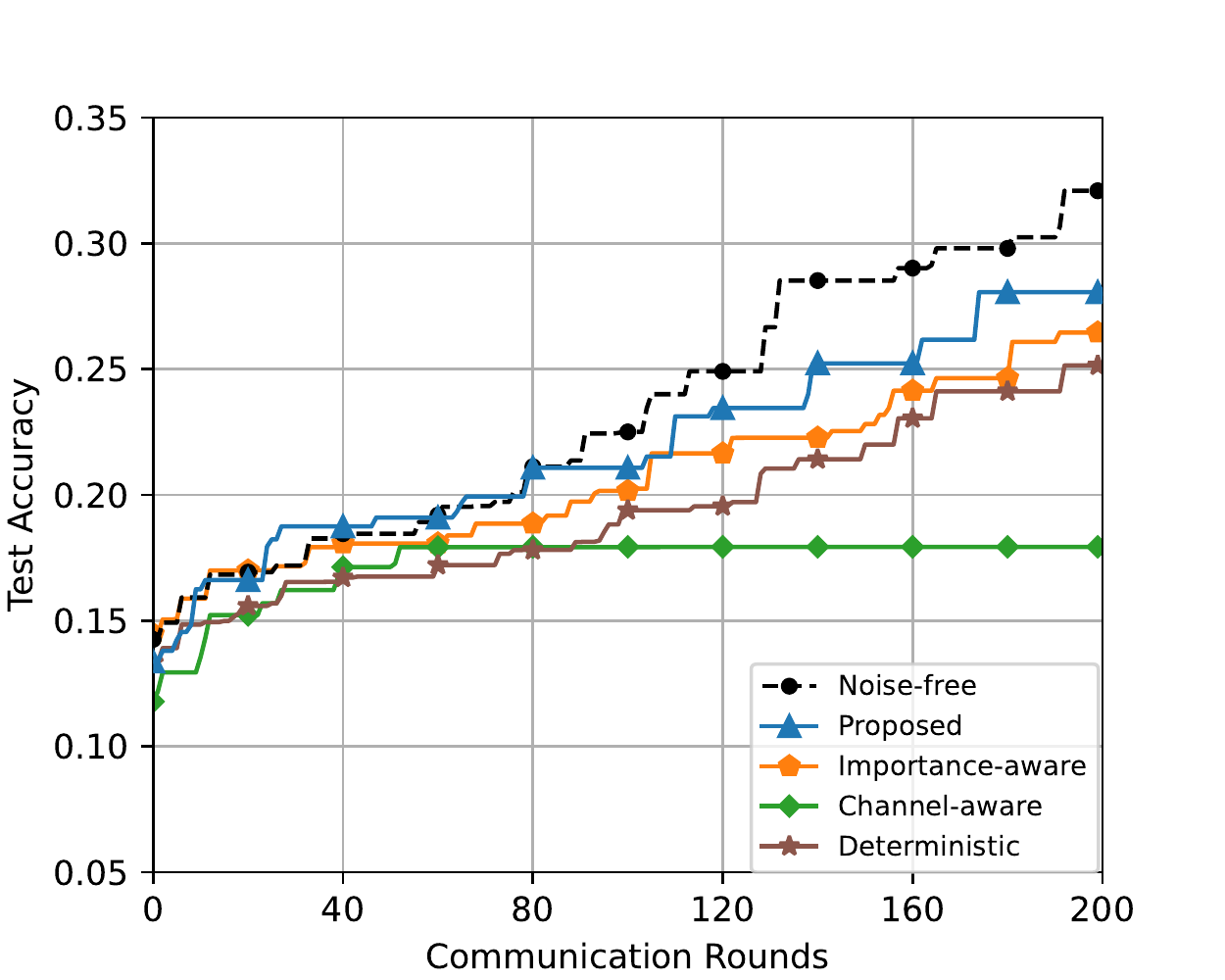}
    \caption{CIFAR-10}
    \label{fig:3-2}
  \end{subfigure}
  \caption{Test accuracy of different device scheduling methods ($|\mathcal{S}^t|=1$) vs. communication rounds on (a) the MNIST dataset and (b) the CIFAR-10 dataset.}
  \label{fig:single}
\end{figure}

\subsubsection{Multi-Device Scheduling}

Next, we evaluate the performance of scheduling $|\mathcal{S}^t|=10$ devices in each communication round and present the results in Fig. \ref{fig:multi}. Compared with the single-device case, all scheduling methods show an improvement in test accuracy.
This is because scheduling more devices allows the server to collect a larger number of gradients, thereby accelerating the FL process.
Similar to the single-device case, our proposed scheduling method achieves the fastest convergence. Noticeably, it also attains the same test accuracy as the idealized noise-free case. This is because our optimization of the device scheduling balances the devices with different channel conditions, which avoids significant communication distortion while assuring the importance of received gradients.
Moreover, it is worth noting that the deterministic device scheduling method directly aggregates the received gradients without reweighting, leading to its slower convergence than most probabilistic device scheduling methods.
However, it avoids improper device selection and aggregation weight design in the channel-aware device scheduling method and thus achieves better learned model accuracy.
Since the deterministic scheduling method cannot achieve unbiased aggregation, we exclude it from the following studies and focus on different scheduling policies under the PO-FL framework.
\begin{figure}[!t]
  \centering
  \begin{subfigure}[b]{0.8\columnwidth}
    \centering
    \includegraphics[width=\textwidth]{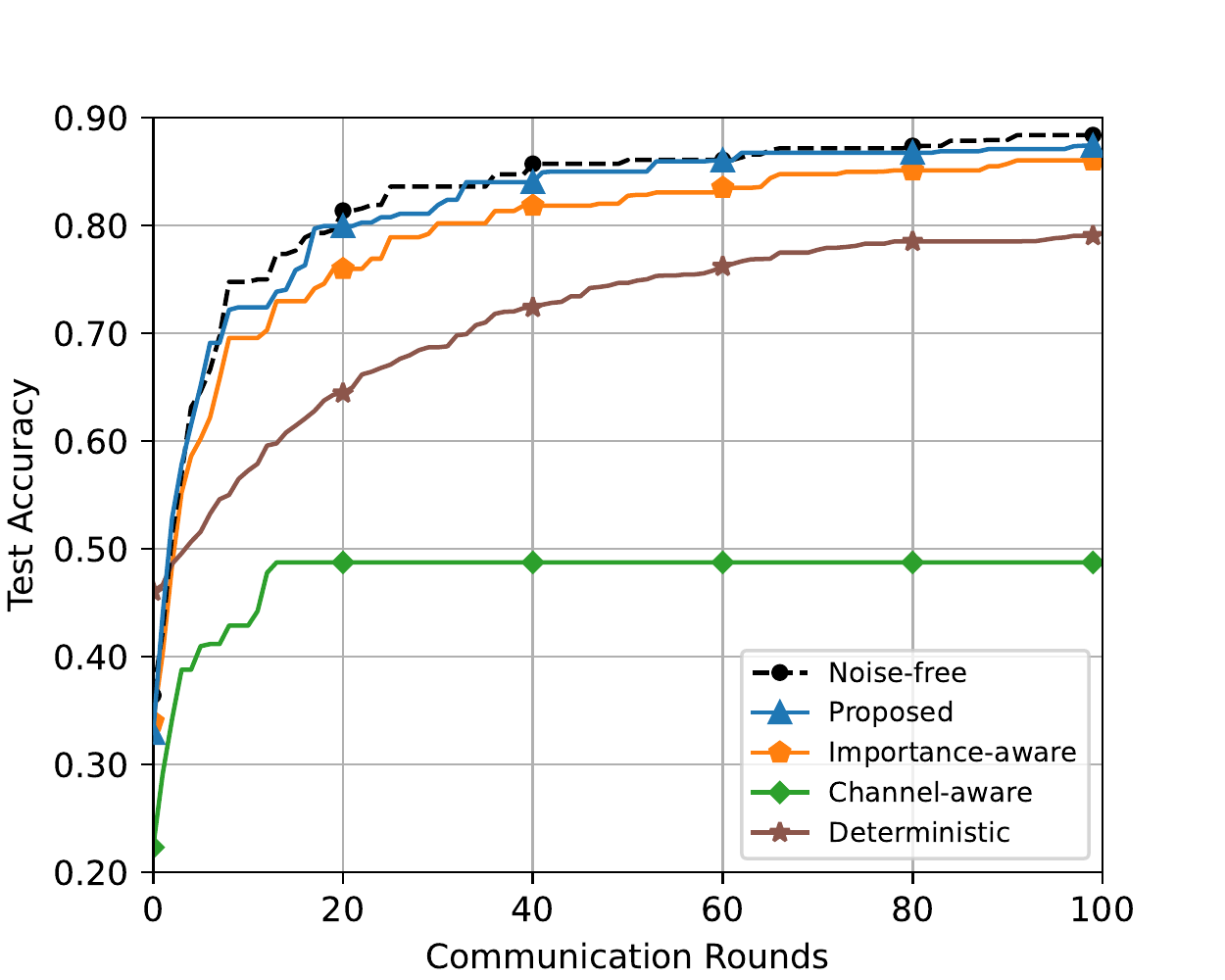}
    \caption{MNIST}
    \label{fig:4-1}
  \end{subfigure}
  \begin{subfigure}[b]{0.8\columnwidth}
    \centering
    \includegraphics[width=\textwidth]{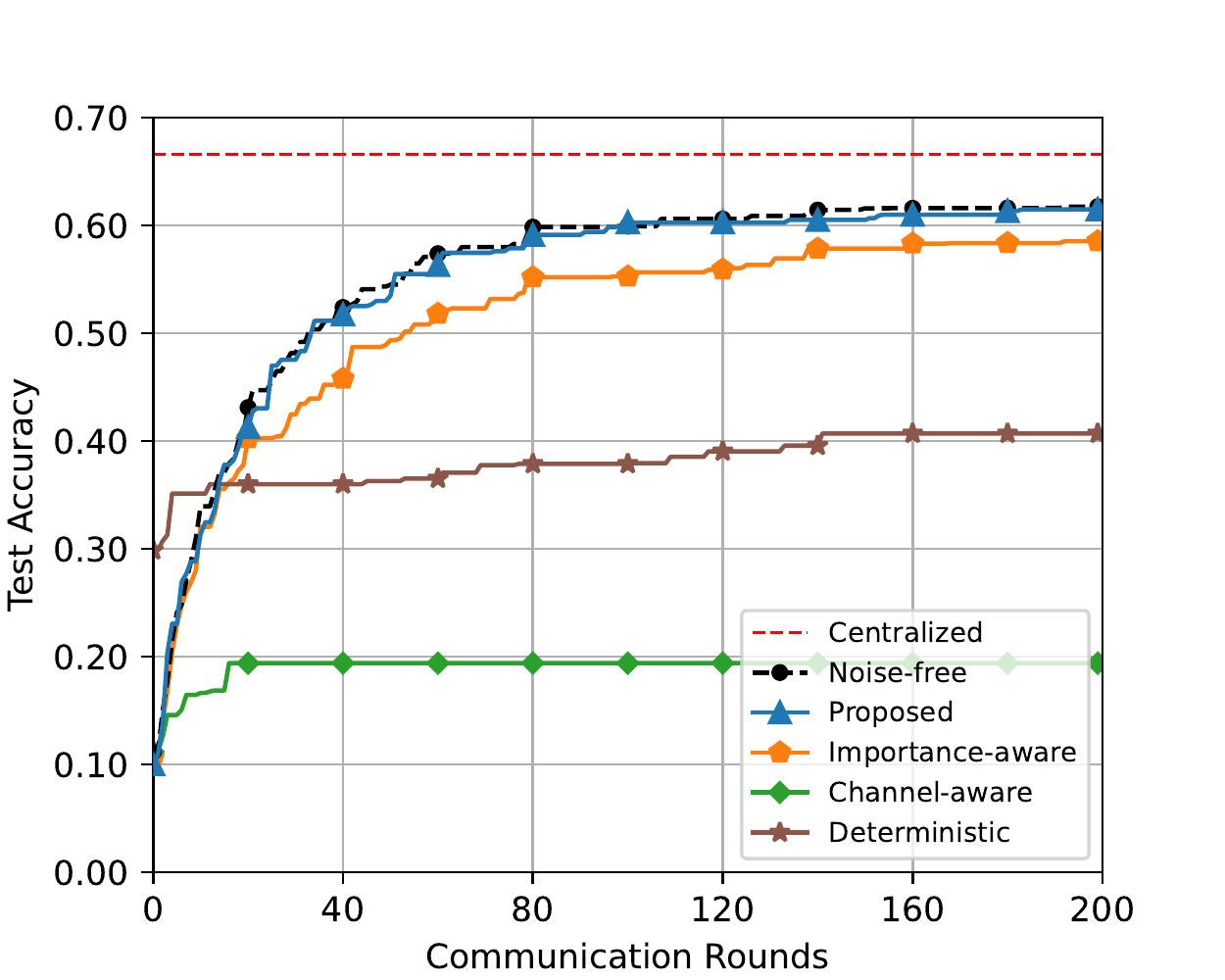}
    \caption{CIFAR-10}
    \label{fig:4-2}
  \end{subfigure}
  \caption{Test accuracy of different device scheduling methods vs. communication rounds on (a) the MNIST dataset and (b) the CIFAR-10 dataset.}
  \label{fig:multi}
\end{figure}

\subsubsection{Effect of the Channel Noise}

We vary the noise power $\sigma_z^2$ and compare the test accuracy of the learned model under the PO-FL framework in Fig. \ref{fig:snr}.
We report the best test accuracy over 100 (respectively 200) communication rounds on the MNIST (respectively CIFAR-10) dataset.
We observe that as the noise power increases, the model performance with different scheduling policies degrades due to more significant communication distortion.
Moreover, the proposed scheduling method outperforms the baseline methods by admitting devices with more important gradient information and better channel quality.
The performance improvement is particularly significant in the noise-limited regime, i.e., $\sigma_z^2=10^{-10} \sim 10^{-9}$, where the negative effect of communication distortion is more prominent.
In addition, the channel-aware device scheduling method leads to notably poor model accuracy, as it aggravates the communication distortion by increasing the aggregation weights for those devices with weak channel conditions, as explained in Remark \ref{remark2}.

\begin{figure}[!t]
  \centering
  \begin{subfigure}[b]{0.8\columnwidth}
    \centering
    \includegraphics[width=\textwidth]{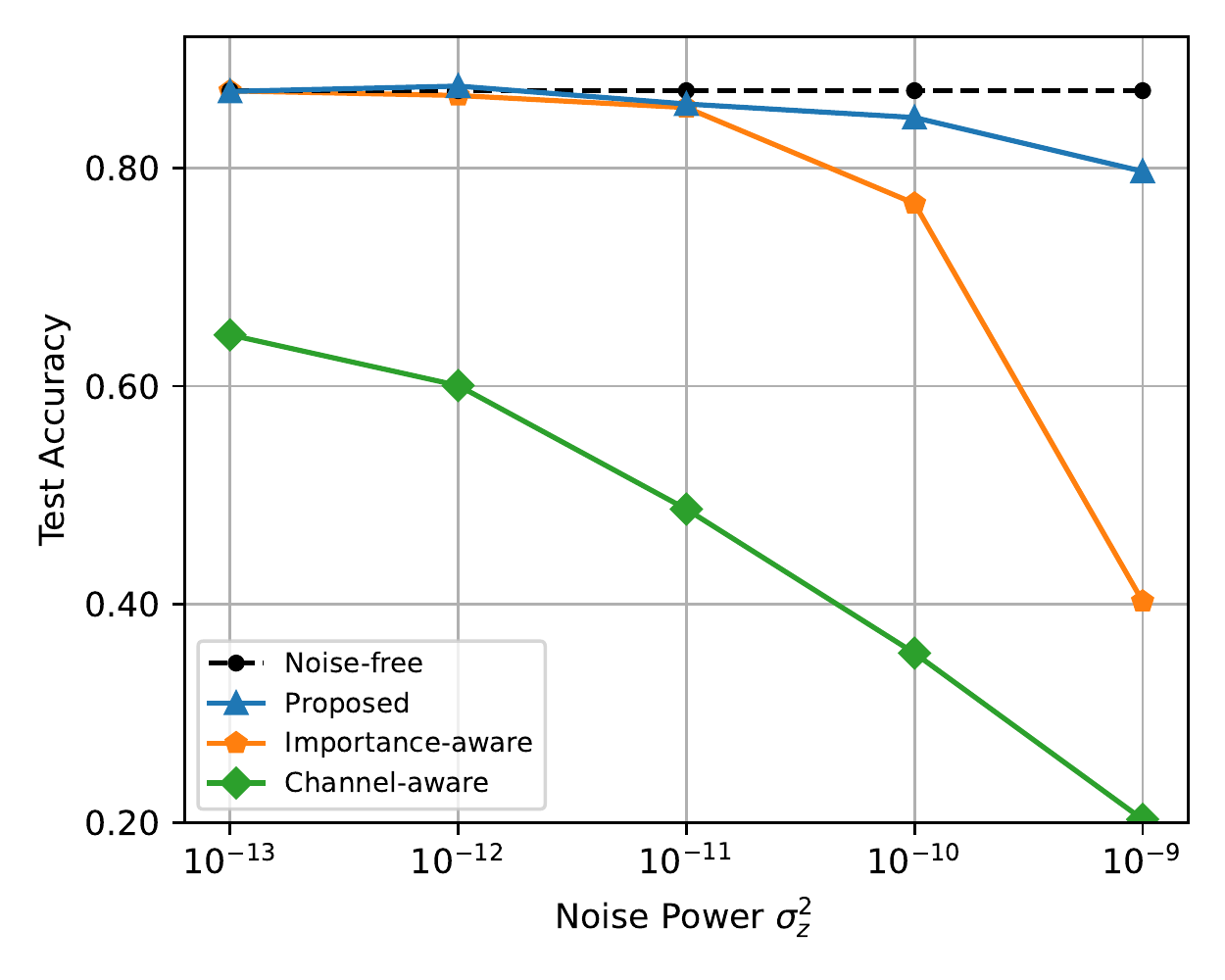}
    \caption{MNIST}
    \label{fig:5-1}
  \end{subfigure}
  \begin{subfigure}[b]{0.8\columnwidth}
    \centering
    \includegraphics[width=\textwidth]{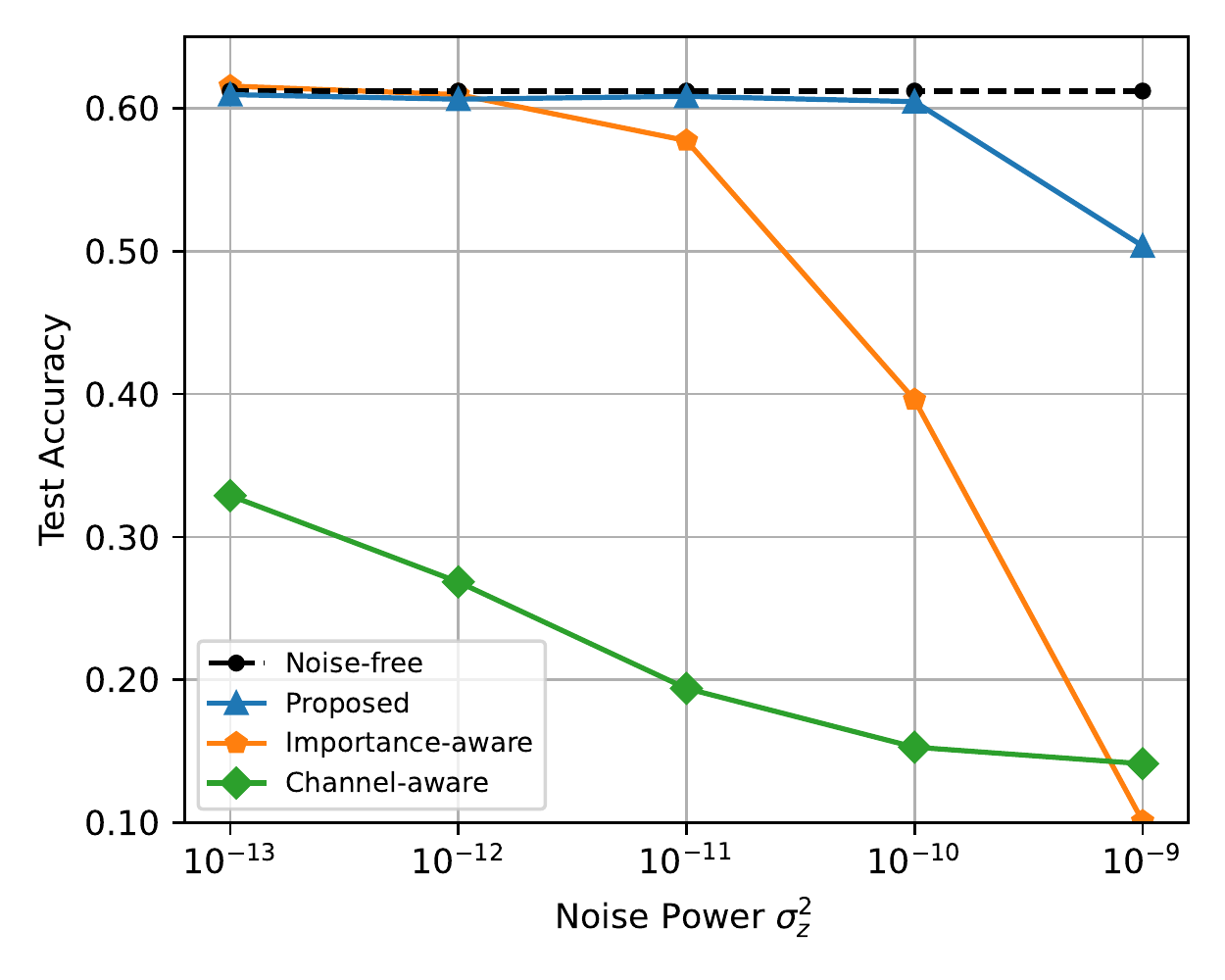}
    \caption{CIFAR-10}
    \label{fig:5-2}
  \end{subfigure}
  \caption{Test accuracy with different noise power $\sigma_z^2$ after (a) 100 communication rounds on the MNIST dataset and (b) 200 communication rounds on the CIFAR-10 dataset.}
  \label{fig:snr}
\end{figure}

\subsubsection{Effect of the Number of Scheduled Devices}
We present the test accuracy of the learned models with varying numbers of scheduled devices $|\mathcal{S}^t|$ in each communication round in Fig. \ref{fig:device}.
We exclude the channel-aware scheduling method from this comparison since it consistently exhibits poor performance.
As the number of scheduled devices $|\mathcal{S}^t|$ increases from $1$ to $20$, all the methods yield better models since more gradients can be collected by the server to accelerate the FL process.
However, when it further increases to $|\mathcal{S}^t|=30$, the learned model accuracy is degraded due to the increased communication distortion.
This observation illustrates the tradeoff between the global update variance and communication distortion due to the number of scheduled devices $|\mathcal{S}^t|$. Therefore, it is important to find a suitable value of $|\mathcal{S}^t|$ in the proposed algorithm.
Moreover, thanks to the optimized scheduling probabilities, the proposed method consistently achieves faster convergence than the baselines.
The performance improvement is especially noticeable when fewer devices are selected, i.e., $|\mathcal{S}^t|=1$ or $5$, since the selected devices differ significantly among different methods.

\begin{figure}[!t]
  \centering
  \begin{subfigure}[b]{0.8\columnwidth}
    \centering
    \includegraphics[width=\textwidth]{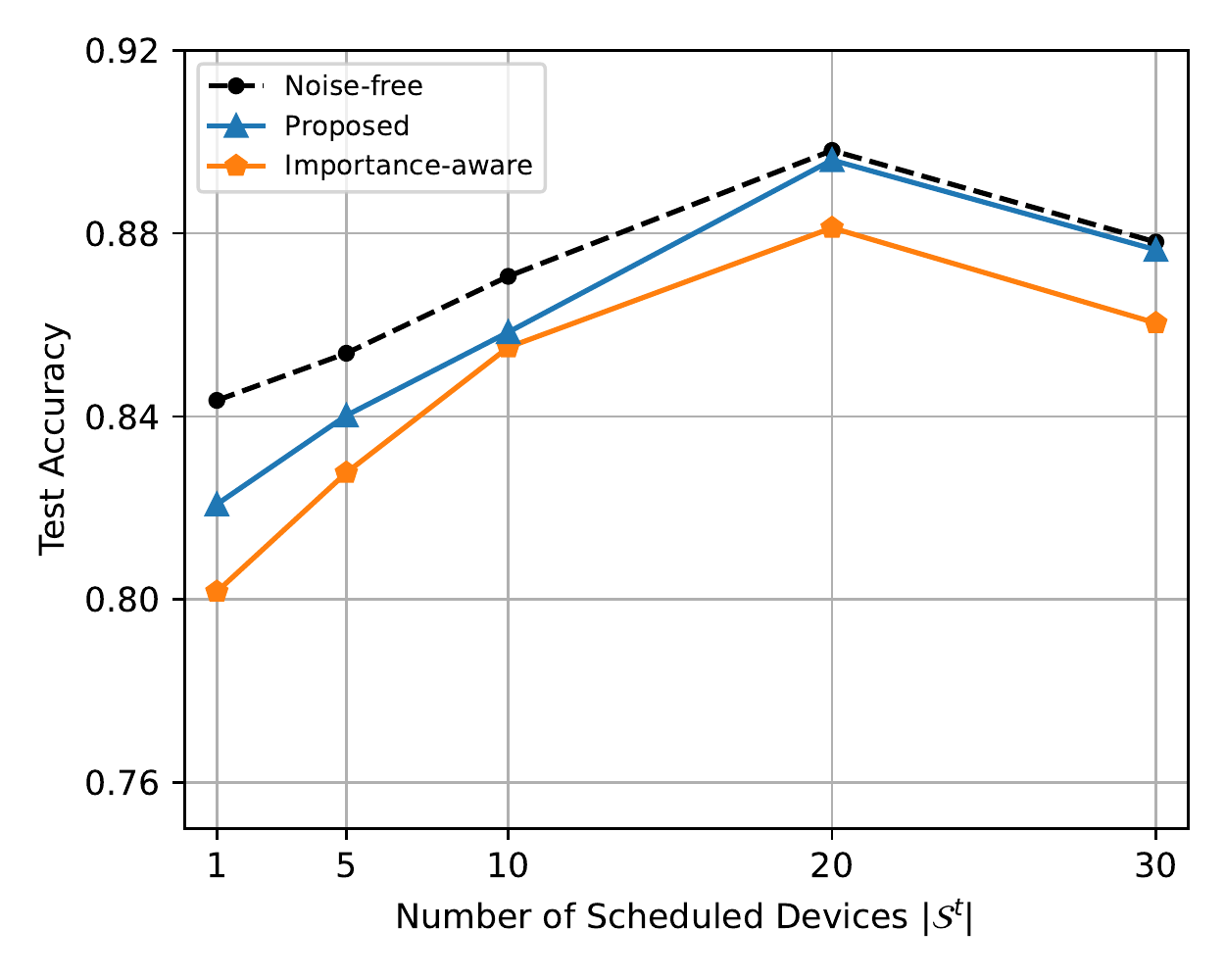}
    \caption{MNIST}
    \label{fig:6-1}
  \end{subfigure}
  \begin{subfigure}[b]{0.8\columnwidth}
    \centering
    \includegraphics[width=\textwidth]{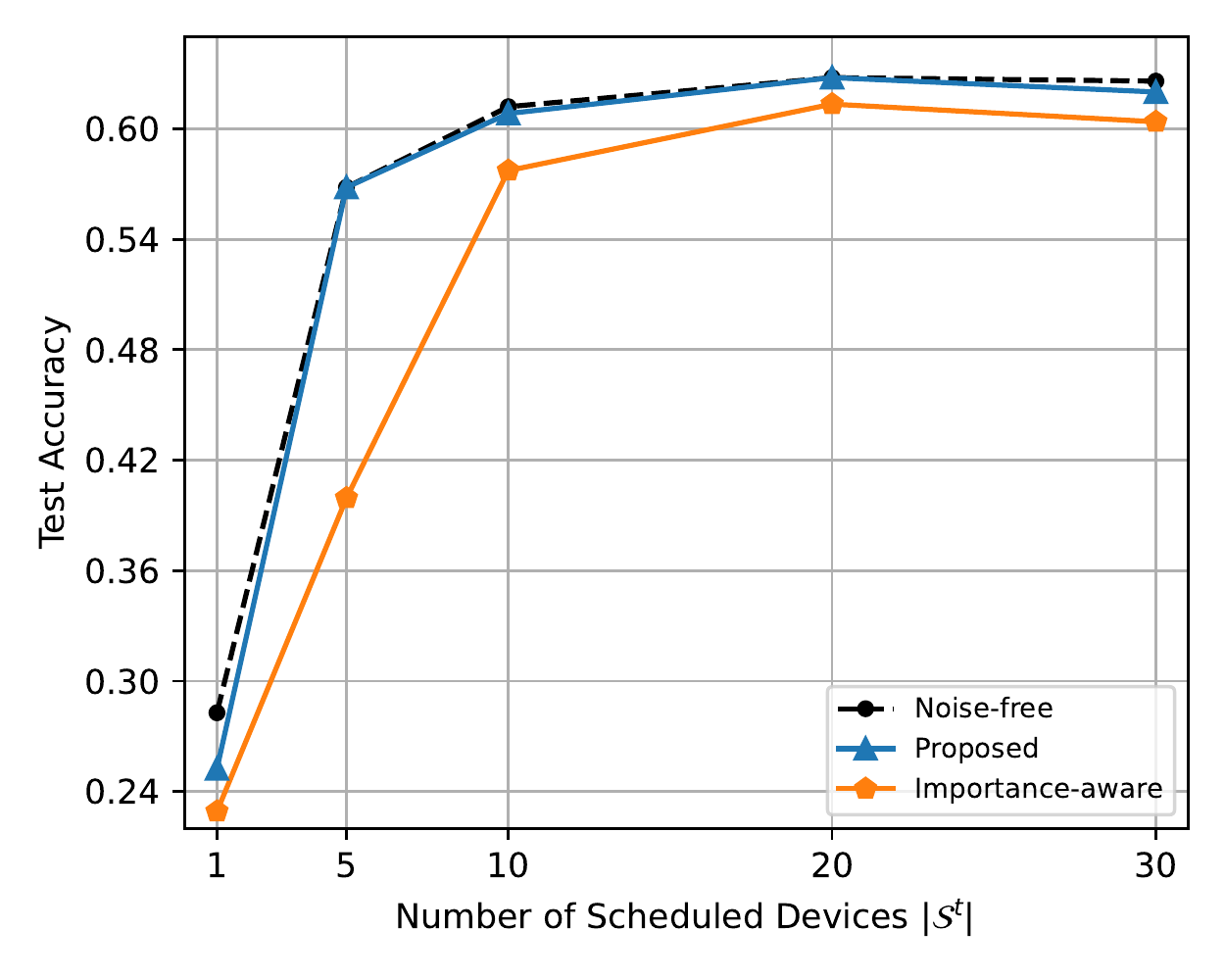}
    \caption{CIFAR-10}
    \label{fig:6-2}
  \end{subfigure}
  \caption{Test accuracy with different numbers of devices vs. communication rounds on (a) the MNIST dataset and (b) the CIFAR-10 dataset.}
  \label{fig:device}
\end{figure}

\subsubsection{Effect of the Data Heterogeneity}

\begin{figure}[!t]
  \centering
  \begin{subfigure}[b]{0.8\columnwidth}
    \centering
    \includegraphics[width=\textwidth]{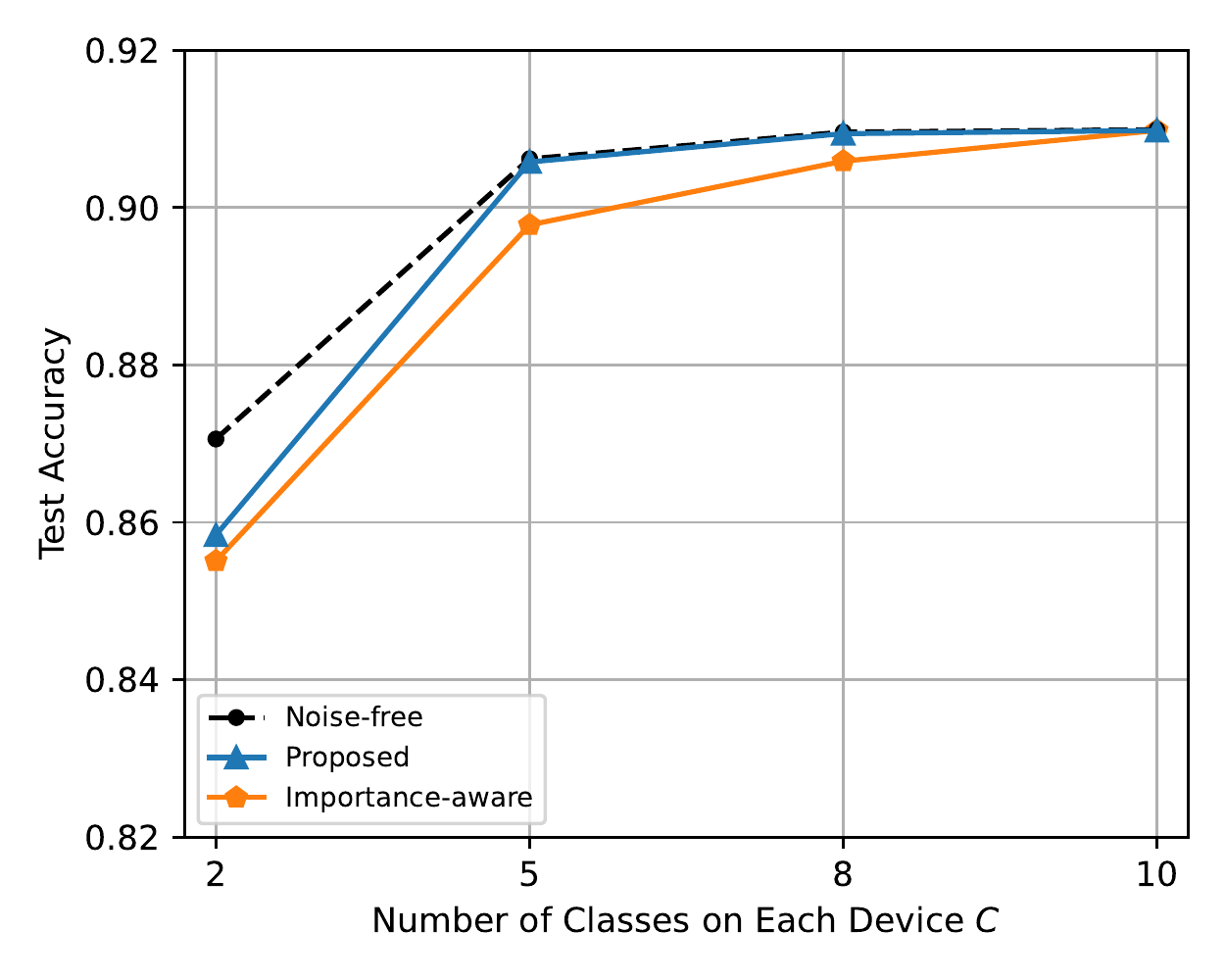}
    \caption{MNIST}
    \label{fig:7-1}
  \end{subfigure}
  \begin{subfigure}[b]{0.8\columnwidth}
    \centering
    \includegraphics[width=\textwidth]{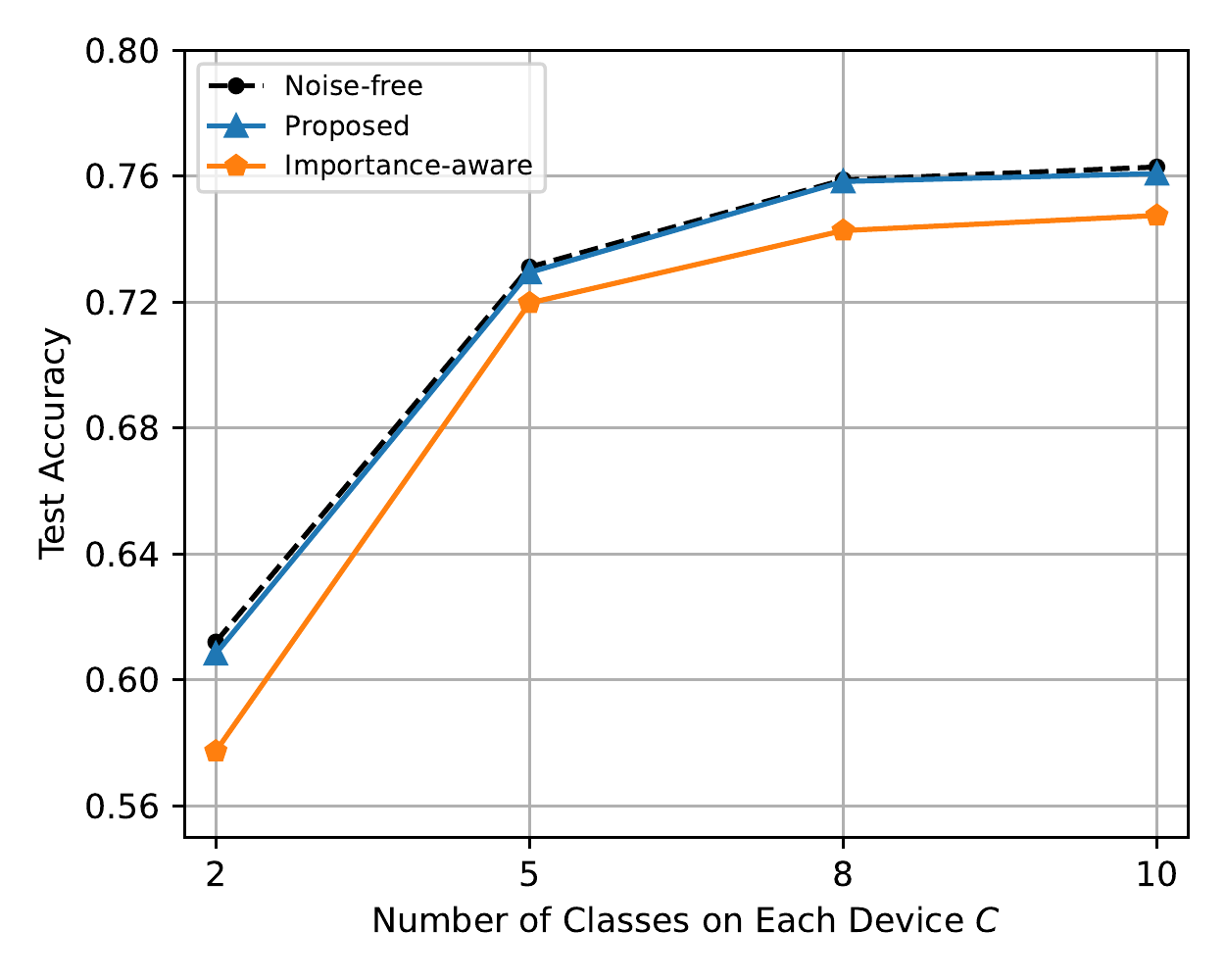}
    \caption{CIFAR-10}
    \label{fig:7-2}
  \end{subfigure}
  \caption{Test accuracy with different data heterogeneity on (a) the MNIST dataset and (b) the CIFAR-10 dataset.}
  \label{fig:non-iid}
\end{figure}

In Fig. \ref{fig:non-iid}, we study the impact of data heterogeneity on the training performance.
Specifically, we vary the data heterogeneity by changing the number of classes of data at each device, denoted by $C$, with an equal number of training samples of $\frac{M}{NC}$ for each class.
The results show that a higher degree of data heterogeneity, i.e., a smaller value of $C$, leads to a slower training speed due to the divergence of gradients.
In this case, the proposed device scheduling policy selects more important gradients while avoiding significant communication distortion, leading to a noticeable improvement in test accuracy.
Moreover, when the data distribution is close to IID, e.g., $C=8$ and $10$, the proposed design approaches the performance upper bound achieved in the noise-free case.

\subsubsection{Effect of the Value of $\alpha$}
In Fig. \ref{fig:alpha} and Table \ref{table:alpha}, we evaluate the test accuracy of the proposed scheduling method with different values of $\alpha$ under varying communication conditions. We observe that as the communication condition deteriorates (i.e., the noise power $\sigma_z^2$ increases), the optimal $\alpha$ tends to take on larger values. This trend can be attributed to the fact that as communication distortion becomes more pronounced, the overall model performance is more severely deteriorated. To mitigate the negative impact and ensure effective model training, it becomes necessary to place higher weight on reducing communication distortion. Thus, a larger value of $\alpha$ is more suitable under such challenging communication conditions, as it emphasizes minimizing distortion during device scheduling.
For example, consider a scenario where unmanned aerial vehicles (UAVs) acting as edge devices fly in urban environments with tall buildings or natural obstacles, the communication link quality may be severely affected. In this case, it is more favorable to use a large value of $\alpha$ to avoid severe communication distortion.
On the other hand, for the Internet of Things (IoT) devices that are distributed close to the server and usually have good communication conditions, a small value of $\alpha$ should be set to prioritize important local updates.

\begin{figure}[!h]
\setlength\abovedisplayskip{2em}
    \centering
    \includegraphics[width=0.8\columnwidth]{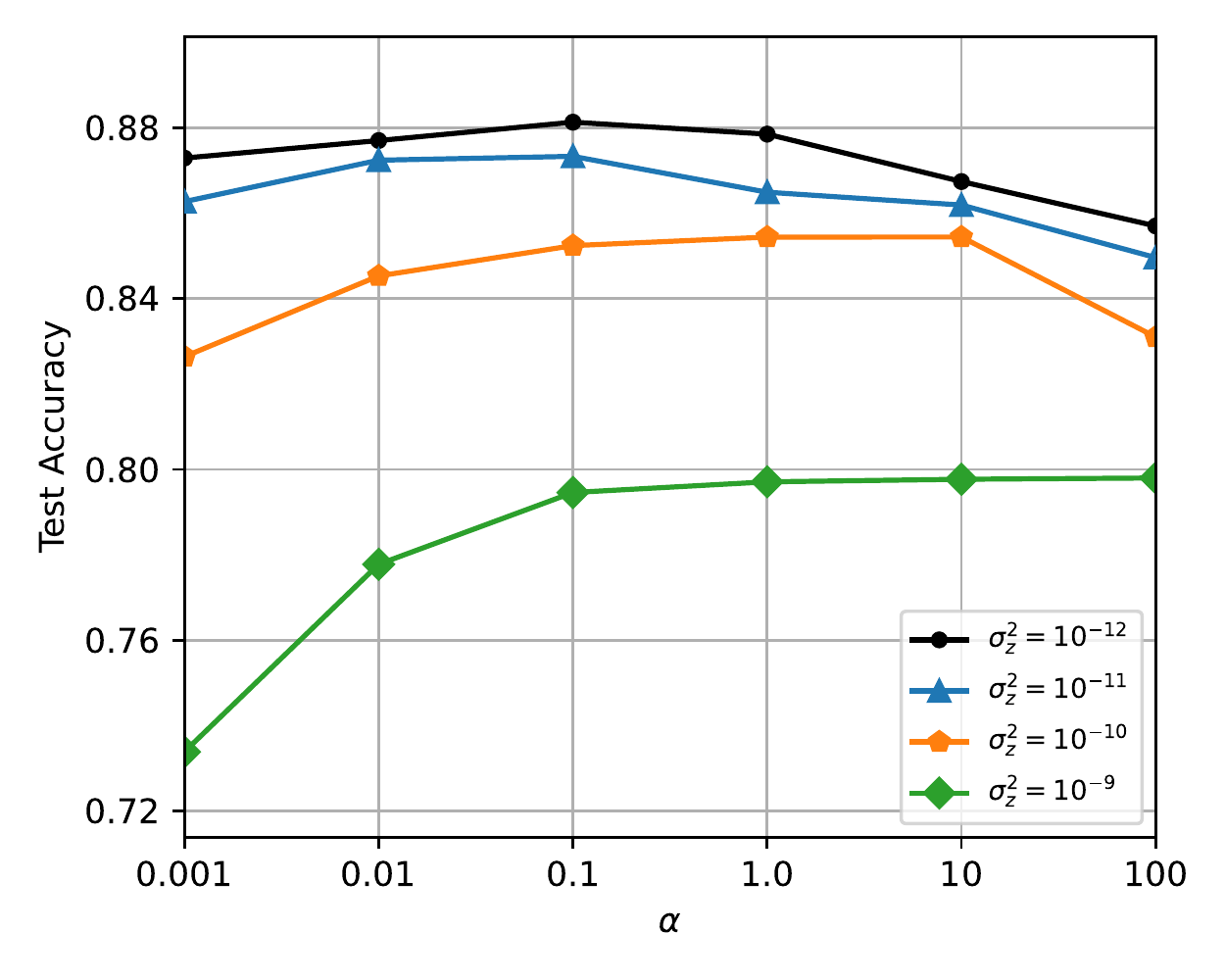}
    \caption{Test accuracy with different values of $\alpha$ after 100 communication rounds on the MNIST dataset.}
    \label{fig:alpha}
\end{figure}

\begin{table}[!h] 
\centering 
\captionof{table}{Test accuracy with different values of $\alpha$ after 100 communication rounds on the MNIST dataset. The best accuracy is highlighted in \textbf{bold}.}
\label{table:alpha}
 \resizebox{\linewidth}{!}{
\begin{tabular}{|c|c|c|c|c|c|c|} 
\hline Test Accuracy & $\alpha=0.001$ & $\alpha=0.01$ & $\alpha=0.1$ & $\alpha=1$ & $\alpha=10$ & $\alpha=100$ \\ \hline
$\sigma_z^2=10^{-9}$ & 0.7339 & 0.7778 & 0.7946 & 0.7971 & 0.7977 & \textbf{0.7980}  \\ \hline 
$\sigma_z^2=10^{-10}$ & 0.8264 & 0.8453 & 0.8524 & 0.8544 & \textbf{0.8544} & 0.8310 \\ \hline 
$\sigma_z^2=10^{-11}$ & 0.8627 & 0.8724 & \textbf{0.8733} & 0.8649 & 0.8619 & 0.8496 \\ \hline 
$\sigma_z^2=10^{-12}$ & 0.8729 & 0.8770 & \textbf{0.8813} & 0.8785 & 0.8674 & 0.857 \\ \hline 
\end{tabular}
}
\end{table}

\section{Conclusions and Future Works}\label{sec:conclusion}
In this paper, we proposed a novel framework named PO-FL that leverages probabilistic device scheduling to enhance the learning performance of over-the-air FL.
We analyzed the convergence of PO-FL for both convex and non-convex loss functions. Our analysis revealed that the device scheduling policy critically affects the learning performance through the communication distortion and the global update variance.
Based on the analytical results, we formulated an optimization problem of device scheduling to jointly minimize these two factors, which was efficiently solved by an algorithm that considers both channel condition and gradient importance.
Extensive simulation results demonstrated that the proposed design outperforms baseline policies and consistently achieves faster convergence.

For future works, it is worth investigating how to optimize the number of scheduled devices in each communication round to further improve the training performance of the PO-FL framework.
Besides, it is also interesting to study the impact of the number of local training steps on the communication efficiency in the PO-FL framework.
Moreover, considering the benefits of AI/ML-enabled methods in wireless networks \cite{li2021toward,mowla2019federated,DuJWRD20}, it is very promising to explore these methods in the device scheduling designs for over-the-air FL.

\appendices

\section{Proof of Theorem \ref{thm:non-convex}}\label{appendix-non-convex}

To prove Theorem \ref{thm:non-convex}, we first analyze the loss decay in each communication round as follows:
\begin{align}
    & \mathbb{E}[f(\bm{w}^{t+1})] - \mathbb{E}[f(\bm{w}^{t})] \nonumber \\
    \overset{(\text{a})}{\leq} & \mathbb{E}\left[\left\langle \nabla f(\bm{w}^{t}), \bm{w}^{t+1} - \bm{w}^{t}\right\rangle\right] + \frac{L}{2} \mathbb{E}[\| \bm{w}^{t+1} - \bm{w}^{t} \|_2^2] \nonumber \\
    = & - \eta^t \mathbb{E} \left[ \left\langle \nabla f(\bm{w}^{t}), \hat{\bm{y}}^t \right\rangle \right] + \frac{L (\eta^t)^2}{2} \mathbb{E}\left[\left\| \hat{\bm{y}}^t \right\|_2^2 \right] \nonumber \\
    = & - \eta^t \mathbb{E} \left[ \left\langle \nabla f(\bm{w}^{t}), \mathbb{E} [\hat{\bm{y}}^t] \right\rangle \right] \nonumber \\
    & + \frac{L (\eta^t)^2}{2} \mathbb{E}\left[\left\| \hat{\bm{y}}^t - \nabla f(\bm{w}^t) + \nabla f(\bm{w}^t) \right\|_2^2 \right] \nonumber \\
    \overset{(\text{b})}{=} & - \eta^t \mathbb{E} \left[ \left\| \nabla f(\bm{w}^t) \right\|_2^2 \right] 
    + \frac{L (\eta^t)^2}{2} \mathbb{E}\left[\left\| \hat{\bm{y}}^t - \nabla f(\bm{w}^t) \right\|_2^2 \right]  \nonumber \\
    & + \frac{L (\eta^t)^2}{2} \mathbb{E} \left[ \left\| \nabla f(\bm{w}^t) \right\|_2^2 \right],
\label{eq:help-53}
\end{align}
where (a) follows from the $L$-smoothness of the loss function in Assumption \ref{smooth}, and (b) follows from the definition of $\hat{\bm{y}}^t$ and its unbiasedness in Lemma \ref{unbiased}, i.e., $\mathbb{E}[\hat{\bm{y}}^t] = \nabla f(\bm{w}^t)$.

Next, we provide an upper bound for the term $\mathbb{E}\left[\left\| \hat{\bm{y}}^t - \nabla f(\bm{w}^t) \right\|_2^2 \right]$ as follows:
\begin{align}
    & \mathbb{E}\left[\left\| \hat{\bm{y}}^t - \nabla f(\bm{w}^t) \right\|_2^2 \right] \nonumber \\ 
    = & \mathbb{E} \left[\left\| \sum_{i\in \mathcal{S}^t} \frac{m_i}{M p_i^t} \bm{g}^{t}_{i} + \frac{ \sqrt{V_{\bm{g}}^{t}}}{a^{t}} \bm{z}^t - \sum_{i\in \mathcal{N}} \frac{m_i}{M} \nabla f_i(\bm{w}^t) \right\|_2^2 \right] \nonumber \\
    \overset{(\text{c})}{\leq} & (1+\alpha) \underbrace{\mathbb{E}\left[\left\| \frac{ \sqrt{V_{\bm{g}}^{t}}}{a^{t}} \bm{z}^t \right\|_2^2 \right]}_{\mathbb{E}[e_{\text{com}}^{t}]}  \nonumber \\
    & + \left( 1+\frac{1}{\alpha} \right) \mathbb{E} \left[\left\| \sum_{i\in \mathcal{S}^t} \frac{m_i}{M p_i^t} \bm{g}^{t}_{i} - \sum_{i\in \mathcal{N}} \frac{m_i}{M} \nabla f_i(\bm{w}^t) \right\|_2^2 \right],
\label{eq:help-57}
\end{align}
where (c) follows from the inequality $\left\|\bm{x} + \bm{y} \right\|_2^2 \leq (1+\alpha) \left\|\bm{x} \right\|_2^2 + \left( 1+\frac{1}{\alpha} \right) \left\|\bm{y} \right\|_2^2$, $\forall \bm{x}, \bm{y}$, and $\alpha>0$. 

Moreover, we decompose the second term in the RHS of \eqref{eq:help-57} as follows:
\begin{align}
    & \mathbb{E} \left[\left\| \sum_{i\in \mathcal{S}^t} \frac{m_i}{M p_i^t} \bm{g}^{t}_{i} - \sum_{i\in \mathcal{N}} \frac{m_i}{M} \nabla f_i(\bm{w}^t) \right\|_2^2 \right]  \nonumber \\
    =&\mathbb{E} \left[\left\| \sum_{i\in \mathcal{S}^t} \frac{m_i}{M p_i^t} \bm{g}^{t}_{i} \!-\! \sum_{i\in \mathcal{N}} \frac{m_i}{M} \bm{g}^{t}_{i}
    \!+\! \sum_{i\in \mathcal{N}} \frac{m_i}{M} \bm{g}^{t}_{i} \!-\! \sum_{i\in \mathcal{N}} \frac{m_i}{M} \nabla f_i(\bm{w}^t) \right\|_2^2 \right] \nonumber\\
    = & \underbrace{\mathbb{E} \left[\left\| \sum_{i\in \mathcal{S}^t} \frac{m_i}{M p_i^t} \bm{g}^{t}_{i} - \sum_{i\in \mathcal{N}} \frac{m_i}{M} \bm{g}^{t}_{i} \right\|_2^2 \right]}_{\mathbb{E}[e_{\text{var}}^{t}]} \nonumber \\
    & + \mathbb{E} \left[\left\| \sum_{i\in \mathcal{N}} \frac{m_i}{M} \bm{g}^{t}_{i} - \sum_{i\in \mathcal{N}} \frac{m_i}{M} \nabla f_i(\bm{w}^t) \right\|_2^2 \right] \nonumber \\
    & + \mathbb{E}\left[ \left\langle \sum_{i\in \mathcal{S}^t}\! \frac{m_i}{M p_i^t} \bm{g}^{t}_{i} \!-\! \sum_{i\in \mathcal{N}}\! \frac{m_i}{M} \bm{g}^{t}_{i} ,\! \sum_{i\in \mathcal{N}}\! \frac{m_i}{M} \bm{g}^{t}_{i} \!-\! \sum_{i\in \mathcal{N}}\! \frac{m_i}{M} \nabla f_i(\bm{w}^t) \right\rangle \right] \nonumber \\
    \overset{(\text{d})}{=} & \mathbb{E}[e_{\text{var}}^{t}] + \mathbb{E} \left[\left\| \sum_{i\in \mathcal{N}} \frac{m_i}{M} \bm{g}^{t}_{i} - \sum_{i\in \mathcal{N}} \frac{m_i}{M} \nabla f_i(\bm{w}^t) \right\|_2^2 \right] \nonumber \\
    \overset{(\text{e})}{\leq} & \mathbb{E}[e_{\text{var}}^{t}] + \sigma^2 ,
\label{eq:gap}
\end{align}
where (d) is because $\mathbb{E}[\bm{g}^{t}_{i}] = \nabla f_i(\bm{w}^t)$, and (e) follows from Assumption \ref{sgd}.

By substituting \eqref{eq:help-57} and \eqref{eq:gap} in \eqref{eq:help-53}, we have:
\begin{align}
    & \mathbb{E}[f(\bm{w}^{t+1})] - \mathbb{E}[f(\bm{w}^{t})] \nonumber \\
    \leq &- \left(\eta^t - \frac{L (\eta^t)^2}{2} \right) \mathbb{E} \left[ \left\| \nabla f(\bm{w}^t) \right\|_2^2 \right] + \frac{L (\eta^t)^2}{2} \left( 1+\frac{1}{\alpha} \right) \sigma^2 \nonumber \\
    & + \frac{L (\eta^t)^2}{2} \left[ (1+\alpha) \mathbb{E}[e_{\text{com}}^{t}] + \left( 1+\frac{1}{\alpha} \right) \mathbb{E}[e_{\text{var}}^{t}] \right] \nonumber \\
    \overset{\text{(f)}}{\leq} & - \frac{\eta^t}{2} \mathbb{E} \left[ \left\| \nabla f(\bm{w}^t) \right\|_2^2 \right] + \frac{L (\eta^t)^2}{2} \left( 1+\frac{1}{\alpha} \right) \sigma^2 \nonumber \\
    & + \frac{L (\eta^t)^2}{2} \left[ (1+\alpha) \mathbb{E}[e_{\text{com}}^{t}] + \left( 1+\frac{1}{\alpha} \right) \mathbb{E}[e_{\text{var}}^{t}] \right], \label{eq:one-round} 
\end{align}
where (f) is because $\eta^t \leq \frac{1}{L}$.

By summing both sides of \eqref{eq:one-round} up over $t=0,1,\dots,T-1$, we have:
\begin{align}
    & \sum_{t=0}^{T-1} \mathbb{E}[f(\bm{w}^{t+1})] - \mathbb{E}[f(\bm{w}^{t})] \nonumber \\
    \leq & - \sum_{t=0}^{T-1} \frac{\eta^t}{2} \mathbb{E} \left[ \left\| \nabla f(\bm{w}^t) \right\|_2^2 \right] + \sum_{t=0}^{T-1} \frac{ L (\eta^t)^2}{2} \left( 1+\frac{1}{\alpha} \right) \sigma^2 \nonumber \\
    & + \sum_{t=0}^{T-1} \frac{L (\eta^t)^2 }{2} \left[ (1+\alpha) \mathbb{E}[e_{\text{com}}^{t}] + \left( 1+\frac{1}{\alpha} \right) \mathbb{E}[e_{\text{var}}^{t}] \right]. \label{eq:help-62}
\end{align}

By rearranging the terms in \eqref{eq:help-62} and dividing both sides by $\frac{1}{2} \gamma_T = \frac{1}{2} \sum_{t=0}^{T-1} \eta^t$, we obtain the following result:
\begin{align}
    & \frac{1}{\gamma_T} \sum_{t=0}^{T-1} \eta^t \mathbb{E}\left[ \left\| \nabla f(\bm{w}^t) \right\|_2^2 \right] \nonumber \\
    \leq & \frac{2}{\gamma_T} \left( \mathbb{E}[f(\bm{w}^{0})] - \mathbb{E}[f(\bm{w}^{T})] \right) + \frac{L}{\gamma_T} \sum_{t=0}^{T-1} (\eta^t)^2 \left( 1+\frac{1}{\alpha} \right) \sigma^2 \nonumber\\
    & + \frac{L}{\gamma_T} \sum_{t=0}^{T-1} (\eta^t)^2 \left[ (1+\alpha) \mathbb{E}[e_{\text{com}}^{t}] + \left( 1+\frac{1}{\alpha} \right) \mathbb{E}[e_{\text{var}}^{t}] \right] \nonumber \\ 
    \leq & \frac{2}{\gamma_T} \left( \mathbb{E}[f(\bm{w}^{0})] - f(\bm{w}^{*}) \right) + \frac{L}{\gamma_T} \sum_{t=0}^{T-1} (\eta^t)^2 \left( 1+\frac{1}{\alpha} \right) \sigma^2 \nonumber \\
    & + \frac{L}{\gamma_T} \sum_{t=0}^{T-1} (\eta^t)^2 \left[ (1+\alpha) \mathbb{E}[e_{\text{com}}^{t}] + \left( 1+\frac{1}{\alpha} \right) \mathbb{E}[e_{\text{var}}^{t}] \right].
\label{eq:help-63}
\end{align}

Furthermore, we have:
\begin{equation} 
    \min_{t\in[T]} \mathbb{E}\left[ \left\| \nabla f(\bm{w}^t) \right\|_2^2 \right] 
    \leq \frac{1}{\gamma_T} \sum_{t=0}^{T-1} \eta^t \mathbb{E}\left[ \left\| \nabla f(\bm{w}^t) \right\|_2^2 \right].
    \label{eq:help-64}
\end{equation}
Substituting \eqref{eq:help-63} into \eqref{eq:help-64} gives the results in \eqref{eq:non-convex}.
\qed

\vspace{-1em}
\section{Proof of Corollary \ref{corollary2}}\label{proof-corollary2}

To prove Corollary \ref{corollary2}, we first bound the global variance using Assumption \ref{sgd2} as follows:
\begin{align}
    \tilde{V}_{\bm{g}}^{t} = & \sum_{i\in \mathcal{N}} \left(\frac{m_i}{M} \right) V_i^t
    = \sum_{i\in \mathcal{N}} \left(\frac{m_i}{M} \right) \frac{1}{D} \sum_{d=1}^{D} (\bm{g}_{i}^{t}[d] - M_{i}^{t} )^2  \nonumber \\
    \leq & \sum_{i\in \mathcal{N}} \left(\frac{m_i}{M} \right) \frac{1}{D} \sum_{d=1}^{D} (\bm{g}_{i}^{t}[d])^2 \leq \frac{G^2}{D}.
\end{align}

Consequently, the communication distortion is upper bounded as follows:
\begin{align}
    \mathbb{E}[e_{\text{com}}^{t}]
    = & \max_{i\in \mathcal{S}^t} \frac{ (\rho_{i}^{t})^2}{P |h_{i}^{t}|^2} \tilde{V}_{\bm{g}}^{t} D\sigma_z^2 \nonumber \\
    \leq & \max_{i\in \mathcal{S}^t} \frac{ (\rho_{i}^{t})^2}{P |h_{i}^{t}|^2} \sigma_z^2 G^2.
\end{align}

Similarly, we upper bound the global update variance as follows:
\begin{align}
    \mathbb{E}[e_{\text{var}}^{t}] =& \mathbb{E}\left[ \left\| \sum_{i\in \mathcal{S}_t} \frac{m_i}{M p_i^t } \bm{g}^{t}_{i} - \sum_{i\in \mathcal{S}^t} \sum_{j\in \mathcal{N}} \frac{m_j}{M} \bm{g}^{t}_{j} \right\|_2^2 \right] \nonumber \\
    \overset{(\text{a})}{=} & \mathbb{E}\left[ \left\| \sum_{i\in \mathcal{S}_t} \frac{m_i}{M p_i^t } \bm{g}^{t}_{i} \right\|_2^2 \right] - \left[ \left\| \sum_{i\in \mathcal{S}^t} \sum_{j\in \mathcal{N}} \frac{m_j}{M} \bm{g}^{t}_{j} \right\|_2^2 \right] \nonumber \\
    \leq & \mathbb{E} \Bigg[ \Bigg\| \sum_{i\in \mathcal{S}_t} \frac{m_i}{M p_i^t } \bm{g}^{t}_{i} \Bigg\|_2^2 \Bigg] \nonumber \\
    \overset{(\text{b})}{\leq} & |\mathcal{S}^t| \sum_{i\in \mathcal{S}^t} \mathbb{E} \Bigg[ \Bigg\| \frac{m_i}{M p_i^t } \bm{g}^{t}_{i} \Bigg\|_2^2 \Bigg] \nonumber \\
    \overset{(\text{c})}{\leq} & |\mathcal{S}^t| \sum_{i\in \mathcal{S}^t} \left( \frac{m_i}{M p_i^t} \right)^2 G^2,
\end{align}
where (a) follows from the unbiasedness in Lemma \ref{unbiased}, i.e., $\mathbb{E} [\frac{m_i}{M p_i^t } \bm{g}^{t}_{i}] = \sum_{j\in \mathcal{N}} \frac{m_j}{M} \bm{g}^{t}_{j}$, (b) is from the inequality $\|\sum_{i\in \mathcal{S}^t} \bm{a} \|^2 \leq |\mathcal{S}^t| \sum_{i\in \mathcal{S}^t}\| \bm{a} \|^2, \forall \bm{a}$, and (c) follows from Assumption \ref{sgd2}.

Therefore, we conclude that there must exist a positive constant $C>0$ such that
\begin{equation}
    (1+\alpha) \mathbb{E}[e_{\text{com}}^{t}] + \left( 1+\frac{1}{\alpha} \right) \mathbb{E}[e_{\text{var}}^{t}] \leq C.
\label{eq:C}
\end{equation}

So far, we can show the convergence with the selected learning rates.
Specifically, the RHS of \eqref{eq:non-convex} can be expressed as follows:
\begin{align}
    & \lim_{T\rightarrow \infty} \frac{2}{\gamma_T}  \left( \mathbb{E}[f(\bm{w}^{0})] - f(\bm{w}^{*}) \right) + \frac{L}{\gamma_T}  \sum_{t=0}^{T-1} (\eta^t)^2 \left( 1+\frac{1}{\alpha} \right) \sigma^2 \nonumber \\
    & + \frac{L}{\gamma_T}  \sum_{t=0}^{T-1} (\eta^t)^2 \left[ (1+\alpha) \mathbb{E}[e_{\text{com}}^{t}] + \left( 1+\frac{1}{\alpha} \right) \mathbb{E}[e_{\text{var}}^{t}] \right] \nonumber \\
    \leq & \lim_{T\rightarrow \infty} \frac{2}{\gamma_T}  \left( \mathbb{E}[f(\bm{w}^{0})] - f(\bm{w}^{*}) \right) \nonumber \\
    & + \frac{L}{\gamma_T}  \sum_{t=0}^{T-1} (\eta^t)^2 \left( 1+\frac{1}{\alpha} \right) \sigma^2 + \frac{L}{\gamma_T}  \sum_{t=0}^{T-1} (\eta^t)^2 C \nonumber \\
    \leq & \lim_{T\rightarrow \infty} \frac{2}{\gamma_T}  \left( \mathbb{E}[f(\bm{w}^{0})] - f(\bm{w}^{*}) \right) \nonumber \\
    & + \frac{L}{\gamma_T}  \sum_{t=0}^{T-1} (\eta^t)^2 \left[ \left( 1+\frac{1}{\alpha} \right) \sigma^2 + C \right] \nonumber \\
    \overset{\text{(d)}}{=} & 0,
\end{align}
where (d) follows from that $\lim_{T\rightarrow \infty} \gamma_T = \infty$ and $\lim_{T\rightarrow \infty} \sum_{t=0}^{T-1} (\eta^t)^2 < \infty$.
\qed

\section{Proof of Theorem \ref{thm:convex}}\label{appendix-convex}
To prove Theorem \ref{thm:convex}, we first introduce the following virtual global model update obtained by aggregating local gradients from all the devices:
\begin{equation}
    \bm{v}^{t+1} = \bm{w}^t - \eta^t \sum_{i\in \mathcal{N}} \frac{m_i}{M} \bm{g}^{t}_{i}.
\end{equation} 
From Lemma \ref{unbiased}, we have $\mathbb{E} [\bm{w}^{t+1} - \bm{v}^{t+1}] = \mathbb{E} [\bm{w}^{t+1}] - \bm{v}^{t+1} = 0$.
Next, we can provide an upper bound for the distance between the global model $\bm{w}^{t+1}$ and the optimal model $\bm{w}^*$ via the virtual sequence $\{\bm{v}^{t}\}$ as follows:
\begin{align}
    & \mathbb{E} \left[ \left\| \bm{w}^{t+1} - \bm{w}^* \right\|_2^2 \right] \nonumber \\
    =& \mathbb{E} \left[ \left\| \bm{w}^{t+1} - \bm{v}^{t+1} + \bm{v}^{t+1} - \bm{w}^* \right\|_2^2 \right] \nonumber \\
    \overset{(\text{a})}{=} & \mathbb{E} \left[ \left\| \bm{w}^{t+1} - \bm{v}^{t+1} \right\|_2^2 \right] + \mathbb{E} \left[ \left\|  \bm{v}^{t+1} - \bm{w}^* \right\|_2^2 \right] \nonumber \\
    & + 2 \mathbb{E} \left[ \left\langle \bm{w}^{t+1} - \bm{v}^{t+1}, \bm{v}^{t+1} - \bm{w}^* \right\rangle \right] \nonumber \\
    \overset{(\text{b})}{=} & \mathbb{E} \left[ \left\| \bm{w}^{t+1} - \bm{v}^{t+1} \right\|_2^2 \right] + \mathbb{E} \Bigg[ \Bigg\|  \bm{w}^{t} - \eta^t \sum_{i\in \mathcal{N}} \frac{m_i}{M} \bm{g}^{t}_{i} - \bm{w}^* \Bigg\|_2^2 \Bigg] \nonumber \\
    \overset{(\text{c})}{=} & \mathbb{E} \Bigg[ \Bigg\| \eta^t \hat{\bm{y}}^t - \eta^t \sum_{i\in \mathcal{N}} \frac{m_i}{M} \bm{g}^{t}_{i} \Bigg\|_2^2 \Bigg]  \nonumber \\
    & + \mathbb{E} \left[ \left\|  \bm{w}^{t} - \bm{w}^* \right\|_2^2 \right]
    + (\eta^t)^2 \mathbb{E} \Bigg[ \Bigg\| \sum_{i\in \mathcal{N}} \frac{m_i}{M} \bm{g}^{t}_{i} \Bigg\|_2^2 \Bigg] \nonumber \\
    & - 2 \eta^t \mathbb{E} \Bigg[ \Bigg\langle \bm{w}^{t} - \bm{w}^*, \sum_{i\in \mathcal{N}} \frac{m_i}{M} \bm{g}^{t}_{i} \Bigg\rangle \Bigg] \nonumber \\
    = & (\eta^t)^2 \mathbb{E} \Bigg[ \Bigg\| \hat{\bm{y}}^t - \sum_{i\in \mathcal{S}^t} \frac{m_i}{M p_i^t} \bm{g}^{t}_{i} + \sum_{i\in \mathcal{S}^t} \frac{m_i}{M p_i^t} \bm{g}^{t}_{i} - \sum_{i\in \mathcal{N}} \frac{m_i}{M} \bm{g}^{t}_{i} \Bigg\|_2^2 \Bigg] \nonumber \\
    & + \mathbb{E} \left[ \left\|  \bm{w}^{t} - \bm{w}^* \right\|_2^2 \right]
    + (\eta^t)^2 \mathbb{E} \left[ \left\| \sum_{i\in \mathcal{N}} \frac{m_i}{M} \bm{g}^{t}_{i} \right\|_2^2 \right]  \nonumber \\
    & - 2 \eta^t \mathbb{E} \Bigg[ \Bigg\langle \bm{w}^{t} - \bm{w}^*, \sum_{i\in \mathcal{N}} \frac{m_i}{M} \bm{g}^{t}_{i} \Bigg\rangle \Bigg] \nonumber \\
    \overset{(\text{d})}{\leq} & (\eta^t)^2 (1+\alpha) \underbrace{\mathbb{E} \Bigg[ \Bigg\| \sum_{i\in \mathcal{S}^t} \frac{m_i}{M p_i^t} \bm{g}^{t}_{i} - \hat{\bm{y}}^t \Bigg\|_2^2 \Bigg]}_{\mathbb{E}[e_{\text{com}}^{t}]} \nonumber \\ 
    & + (\eta^t)^2  \left( 1+\frac{1}{\alpha} \right) \underbrace{\mathbb{E} \Bigg[\Bigg\| \sum_{i\in \mathcal{S}^t} \frac{m_i}{M p_i^t} \bm{g}^{t}_{i} - \sum_{i\in \mathcal{N}} \frac{m_i}{M} \bm{g}^{t}_{i} \Bigg\|_2^2 \Bigg]}_{\mathbb{E}[e_{\text{var}}^{t}]} \nonumber \\
    & + \mathbb{E} \left[ \left\|  \bm{w}^{t} - \bm{w}^* \right\|_2^2 \right]
    + (\eta^t)^2 \mathbb{E} \left[ \left\| \sum_{i\in \mathcal{N}} \frac{m_i}{M} \bm{g}^{t}_{i} \right\|_2^2 \right] \nonumber \\
    & - 2 \eta^t \mathbb{E} \left[ \left\langle \bm{w}^{t} - \bm{w}^*, \sum_{i\in \mathcal{N}} \frac{m_i}{M} \bm{g}^{t}_{i} \right\rangle \right],
    \label{eq:help-42}
\end{align}
where (a) and (c) follow from the equality $\|\bm{x}+\bm{y}\|_2^2 = \|\bm{x}\|_2^2+ \|\bm{y}^2\|_2 + 2\langle \bm{x}, \bm{y} \rangle$, (b) is because $\mathbb{E} [\bm{w}^{t+1}] = \bm{v}^{t+1}$, and (d) follows from the inequality $\left\|\bm{x} + \bm{y} \right\|_2^2 \leq (1+\alpha) \left\|\bm{x} \right\|_2^2 + \left( 1+\frac{1}{\alpha} \right) \left\|\bm{y} \right\|_2^2$, $\forall \bm{x}, \bm{y}$, and $\alpha>0$. 

By the convexity of the local loss function in Assumption \ref{convex}, we have:
\begin{equation}
    \mathbb{E} \left\langle \bm{w}^{t} - \bm{w}^*, \nabla f(\bm{w}^{t}) \right\rangle \geq \mathbb{E}[f(\bm{w}^{t})] - \mathbb{E}[f(\bm{w}^{*})].
    \label{eq:help-43}
\end{equation}
Meanwhile, by Assumption \ref{sgd2}, we have:
\begin{equation}
    \mathbb{E} \Big[ \Big\| \sum_{i\in \mathcal{N}} \frac{m_i}{M} \bm{g}^{t}_{i} \Big\|_2^2 \Big] \leq \sum_{i\in \mathcal{N}} \frac{m_i}{M} G^2 = G^2.
    \label{eq:help-44}
\end{equation}
Combining \eqref{eq:help-42}-\eqref{eq:help-44}, we have:
\begin{align}
    & \mathbb{E} \left[ \left\| \bm{w}^{t+1} - \bm{w}^* \right\|_2^2 \right] \nonumber \\
    \leq & (\eta^t)^2 \left[ (1+\alpha) \mathbb{E}[e_{\text{com}}^{t}] + \left( 1+\frac{1}{\alpha} \right) \mathbb{E}[e_{\text{var}}^{t}] \right]
    + \mathbb{E} \left[ \left\|  \bm{w}^{t} - \bm{w}^* \right\|_2^2 \right] \nonumber \\
    & + (\eta^t)^2 \mathbb{E} \left[ \left\| \sum_{i\in \mathcal{N}} \frac{m_i}{M} \bm{g}^{t}_{i} \right\|_2^2 \right] \nonumber \\
    & - 2 \eta^t \left( \mathbb{E}[f(\bm{w}^{t})] - \mathbb{E}[f(\bm{w}^{*})] \right) \nonumber \\
    \leq & \mathbb{E} \left[ \left\| \bm{w}^{t} - \bm{w}^* \right\|_2^2 \right] + (\eta^t)^2 \left[ (1+\alpha) \mathbb{E}[e_{\text{com}}^{t}] + \left( 1+\frac{1}{\alpha} \right) \mathbb{E}[e_{\text{var}}^{t}] \right]  \nonumber \\
    & + (\eta^t)^2 G^2
    - 2\eta^t \left( \mathbb{E}[f(\bm{w}^{t})] - f(\bm{w}^{*}) \right).
\label{eq:help-1}
\end{align}

By rearranging the terms in \eqref{eq:help-1}, summing both sides up over $t=0,1,\dots, T-1$, and dividing them by $2\gamma_T = 2 \sum_{t=0}^{T-1} \eta^t $, we have:
\begin{align}
    & \frac{1}{\gamma_T} \sum_{t=0}^{T-1} \eta^t \left( \mathbb{E}[f(\bm{w}^{t})] - f(\bm{w}^{*}) \right)  \nonumber \\
    \leq & \frac{1}{2\gamma_T} \sum_{t=0}^{T-1} \mathbb{E} \left[ \left\| \bm{w}^{t} - \bm{w}^* \right\|_2^2 \right] - \frac{1}{2\gamma_T} \sum_{t=0}^{T-1} \mathbb{E} \left[ \left\| \bm{w}^{t+1} - \bm{w}^* \right\|_2^2 \right] \nonumber \\
    & + \frac{1}{2\gamma_T} \sum_{t=0}^{T-1} (\eta^t)^2 G^2 \nonumber \\
    & + \frac{1}{2\gamma_T} \sum_{t=0}^{T-1} (\eta^t)^2 \left[ (1+\alpha) \mathbb{E}[e_{\text{com}}^{t}] + \left( 1+\frac{1}{\alpha} \right) \mathbb{E}[e_{\text{var}}^{t}] \right] \nonumber \\
    \leq & \frac{1}{2\gamma_T} \! \mathbb{E} \left[ \left\| \bm{w}^{0} - \bm{w}^* \right\|_2^2 \right]
    \!+\! \frac{1}{2\gamma_T} \! \sum_{t=0}^{T-1} (\eta^t)^2  G^2 \nonumber \\
    & + \frac{1}{2\gamma_T} \! \sum_{t=0}^{T-1} (\eta^t)^2 \left[ (1+\alpha) \mathbb{E}[e_{\text{com}}^{t}] \!+\! \left( 1+\frac{1}{\alpha} \right) \mathbb{E}[e_{\text{var}}^{t}] \right]. \label{eq:help-46}
\end{align}

Moreover, since $ \min_{t\in[T]} a_t \leq \sum_{t=0}^{T-1} \frac{\eta^t}{\sum_{t=0}^{T-1} \eta^t} a_t$ holds for any $a_t$, we have:
\begin{align}
    \mathbb{E}[f(\tilde{\bm{w}}^T)] - f(\bm{w}^{*}) 
    =& \min_{t\in[T]} \mathbb{E} [f(\bm{w}^t)] - f(\bm{w}^{*}) \nonumber \\
    \leq & \left( \sum_{t=0}^{T-1} \frac{\eta^t}{\gamma_T} \mathbb{E}[f(\bm{w}^{t})] \right)- f(\bm{w}^{*}) \nonumber \\
    = & \frac{1}{\gamma_T} \sum_{t=0}^{T-1} \eta^t \left( \mathbb{E}[f(\bm{w}^{t})] - f(\bm{w}^{*}) \right).
\label{eq:help-47}
\end{align}
Substituting \eqref{eq:help-46} into \eqref{eq:help-47} gives the results in \eqref{eq:convex}.
\qed

\section{Proof of Corollary \ref{corollary1}} \label{proof-corollary1}

From \eqref{eq:C}, we can express the RHS of \eqref{eq:convex} as follows:

\begin{align}
    & \lim_{T\rightarrow \infty} \frac{1}{2\gamma_T} \sum_{t=0}^{T-1} \left( 1 - \mu\eta^0 \right) \mathbb{E} \left[ \left\| \bm{w}^{0} - \bm{w}^* \right\|_2^2 \right] + \frac{1}{2\gamma_T} \sum_{t=0}^{T-1} (\eta^t)^2 G^2 \nonumber \\
    & + \frac{1}{2\gamma_T} \sum_{t=0}^{T-1} (\eta^t)^2 \left[ (1+\alpha) \mathbb{E}[e_{\text{com}}^{t}] + \left( 1+\frac{1}{\alpha} \right) \mathbb{E}[e_{\text{var}}^{t}] \right] \nonumber \\
    \leq & \lim_{T\rightarrow \infty} \frac{1}{2\sum_{t=0}^{T-1} \eta^t} \times \nonumber \\
    & \underbrace{\left[ \sum_{t=0}^{T-1} \left( 1 - \mu\eta^0 \right) \mathbb{E} \left[ \left\| \bm{w}^{0} - \bm{w}^* \right\|_2^2 \right] \!+\! \sum_{t=0}^{T-1} (\eta^t)^2 C \!+\! \sum_{t=0}^{T-1} (\eta^t)^2 G^2 \right]}_{C_1} \nonumber \\
    \overset{(\text{a})}{=} & 0,
\end{align}
where (a) is because $C_1$ is a constant irrespective of $t$ and $\lim_{T\rightarrow \infty} \gamma_T = 0$.
\qed

\bibliographystyle{IEEEtran}
\bibliography{bibliofile}

\end{document}